\newcommand{\rulesep}{\unskip\ \vrule\ }
\definecolor{pistachio}{rgb}{0.58, 0.77, 0.45}
\definecolor{cinnamon}{rgb}{0.82, 0.41, 0.12}
\definecolor{celadon}{rgb}{0.67, 0.88, 0.69}
\definecolor{cadmiumgreen}{rgb}{0.0, 0.42, 0.24}
\definecolor{amethyst}{rgb}{0.6, 0.4, 0.8}
\theoremstyle{plain}
\newtheorem{prop}{Proposition}
\theoremstyle{remark}
\theoremstyle{definition}
\newtheorem{example}{Example}
\journal{EJOR}
\begin{document}

\begin{frontmatter}

\title{Efficient order picking methods in robotic mobile fulfillment systems}




\author[mymainaddress]{Lin Xie\corref{mycorrespondingauthor}}
\cortext[mycorrespondingauthor]{Lin Xie}
\ead{xie@leuphana.de}
\ead[url]{http://www.leuphana.de/or/}
\author[mymainaddress]{Nils Thieme}
\author[mymainaddress]{Ruslan Krenzler}
\author[mysecondaryaddress]{Hanyi Li}

\address[mymainaddress]{Leuphana University of L\"uneburg, Universit\"atallee 1, 21335 L\"uneburg}
\address[mysecondaryaddress]{Beijing Hanning Tech Co., Ltd}

\begin{abstract}
Robotic mobile fulfillment systems (RMFSs) are a new type of warehousing system, which has received more attention recently, due to increasing growth in the e-commerce sector. Instead of sending pickers to the inventory area to search for and pick the ordered items, robots carry shelves (called "pods") including ordered items from the inventory area to picking stations. In the picking stations, human pickers put ordered items into totes; then these items are transported by a conveyor to the packing stations. This type of warehousing system relieves the human pickers and improves the picking process. In this paper, we concentrate on decisions about the assignment of pods to stations and orders to stations to fulfill picking for each incoming customer's order. In previous research for an RMFS with multiple picking stations, these decisions are made sequentially. Instead, we present a new integrated model. To improve the system performance even more, we extend our model by splitting orders. This means parts of an order are allowed to be picked at different stations. To the best of the authors' knowledge, this is the first publication on split orders in an RMFS. We analyze different performance metrics, such as pile-on, pod-station visits, robot moving distance and order turn-over time. We compare the results of our models in different instances with the sequential method in our open-source simulation framework RAWSim-O.
\end{abstract}

\begin{keyword}
order picking problem \sep parts-to-picker warehouses \sep  robotic mobile fulfillment systems \sep  warehousing \sep simulation
\end{keyword}

\end{frontmatter}


\section{Introduction}\label{sec:intro}
The most important and time-consuming task in a warehouse is the collection of items from their storage locations to fufill customer orders. The process is called \textit{order picking}, which may constitute about 50--65\% of the operating costs. In a traditional manual order picking system (also called a \textit{picker-to-parts system}), the pickers  spend 70\% of their working time on the tasks of search and travel (see \cite{Tompkins.2010}; for an overview of manual order picking systems see \cite{de2007design}). Due to the increasingly fast-paced economy, it is becoming more and more important that the orders are processed in a short time window. Moreover, the order picking is considered as the highest-priority area for productivity improvements (see \cite{de2007design}). 

The picker-to-parts system is no longer fit for e-commerce operations, since the companies usually have millions of small items in large warehouses. Kiva Systems LLC, now Amazon Robotics LLC, came up with a unique solution that accelerates the order picking process (see \cite{Wurman.2008}). Robots are sent to carry storage units, so-called "pods," from the inventory area and bring them to human operators, who work at picking stations. At the stations, the items are picked according to the customers' orders. This system is called \textit{robotic mobile fulfillment system} (RMFS). There are also some other suppliers of such systems, such as Scallog, Swisslog (KUKA),  GreyOrange and Hitachi (see \cite{banker2016robots}). All of these systems may differ technically in certain aspects, such as the lifting mechanism, but they share the same principle of the system (see Section \ref{description}).

There are four decision problems for each new incoming order in an RMFS. We have to decide which robot carries which pod along which path to which station to fulfill orders. We concentrate in this paper on decisions about the assignment of orders to stations (pick order assignment, in short: POA) and pods to stations (pick pod selection, in short: PPS). The other two decisions (robot task allocation, path finding) are made by the existing methods in our open-source simulation "RAWSim-O" (see \cite{RawSim.1}). 

We consider an efficient order picking system as a system to handle more orders within minimal time (as suggested in \cite{van2018designing} for picker-to-parts systems). In an efficient RMFS, we try to keep pickers busy by providing pods to pick from. In order to achieve that, we want to reduce idle time between changes of pods, so in this paper we aim at minimizing the visits by pods to stations for given sets of orders for the POA and PPS problems.  We will explain more about the objective in Section~\ref{subsec:obj}. In the following subsections, we describe two contributions of this paper. There are two methods to achieve our goal.

\subsection{Contribution I: Integrated POA and PPS for multiple stations}
In the literature, POA and PPS are usually solved sequentially for multiple stations (first POA, then PPS; see \cite{Wurman.2008}, \cite{RawSimDecisionRules}). Each time a new order arrives (and there is at least one free space in a picking station), it is first assigned to a station (POA) and then one or several pods are assigned to that station to fulfill that order (PPS). In this work, we integrate both the POA and PPS problems, and we will explain in Section~\ref{subsec:example} the differences between the sequential and integrated problems and also the possible advantages of using the integrated approach. \cite{Boysen.2017} provide methods for optimally batching the orders and sequencing both the orders and the pods at a single picking station. They prove that these problems are NP-hard. As an RMFS usually has more than one picking station, the decisions made at one station may influence decisions at other stations. Therefore, the decisions for several picking stations cannot be calculated separately. Our integrated approach allows for more than one picking station. This increases the complexity of both problems. Furthermore, we might encounter synchronization problems between several stations, for example the delayed arrival of a planned pod coming from another station (see a similar problem description in \cite{krenzler2018deterministic}). Moreover, instead of optimally batching the orders as in \cite{Boysen.2017}, in the real world we have to make decisions for both problems as soon as some jobs are finished at the stations, while there are unfulfilled orders. We call them \textit{periodic decisions}. Also, situations such as the inventory of pods and the positions of pods in the queues at stations can and will change over time. They are important for the POA and PPS decisions and are hard to calculate exactly in advance, since errors and delays in previous time periods can affect them. For these reasons, instead of calculating the optimal assignments for all time periods in advance (as in \cite{Boysen.2017}), we make the decisions for the integrated POA and PPS right before the respective time period starts. This allows us to react to the current situation and take errors or delays from previous periods into account. Furthermore, we test our results in a simulation framework, which provides us with the actual information for each time period.

\subsection{Contribution II: Allowing split orders in an RMFS}

In our integrated approach mentioned above, an order is only allowed to be assigned to a single station. The second contribution of this paper is to allow split orders in our integrated approach. A \textit{split order} means that we divide an order into two or more parts for picking (perhaps at different stations). A similar term, ``splitting orders,'' can be traced back to 1979, when it was used by \cite{Armstron.1979}. They used split orders to keep batch sizes constant in batch picking. In \cite{overview.old} and \cite{Koster.zones} split orders are used as part of the zoning in traditional picker-to-parts warehouses, in which a storage area is split into multiple parts (called zones), each with a different order picker. When an order contains several SKUs (stock keeping units) that are stored in different zones, the SKUs for the order are picked separately in each zone and merged later for shipping. To the best of the authors' knowledge, this is the first publication on split orders in an RMFS. 

According to the following example, we expect allowing split orders in an RMFS provide a better solution. 

\begin{example}
Figure \ref{fig:example_splitting} illustrates the decision problem when assigning orders and pods to two picking stations. We have one empty tote at each station, while we have two identical orders 1 and 2 in the backlog (in Figure~\ref{fig:t_0_split}). We assume that each tote can hold two items. These two orders contain SKUs shown in blue and orange. These two SKUs are located in two different pods, namely pod 1 with the orange SKU and pod 2 with the blue SKU. Figure \ref{fig:t_1_without_split} shows the optimal solution to the problem without split orders. We need pod 1 to visit station 1 and pod 2 to visit station 2; after that, pod 2 visits station 1 and pod 1 visits station 2. In total, we need four visits by pods to the stations to fulfill both orders. Instead, if we split orders 1 and 2 into blue and orange parts (see Figure~\ref{fig:t_1_with_split}), the blue ones can be picked from pod 2 at station 2, while the orange ones can be picked from pod 1 at station 1. This allows both orders to be fulfilled with only two visits by pods to the stations instead of four. Note that in this paper we don't use one empty tote for exactly one order, but for several items, to enable comparison between the solutions with and without split orders. We will explain more about this in Section~\ref{subsec:assumption} in the paragraph \textit{Capacity of a picking station}. 
\end{example}

\begin{figure}[h]
	\begin{subfigure}[t]{\textwidth}
		\hspace{+0.5cm}
		\includegraphics[width = 0.2\textwidth, left]{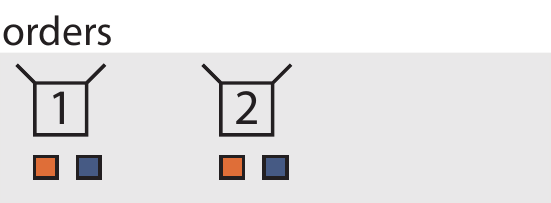}
	\end{subfigure}
	\newline
	\centering	
	\begin{subfigure}[b]{0.3\textwidth}
		\includegraphics[width = \textwidth]{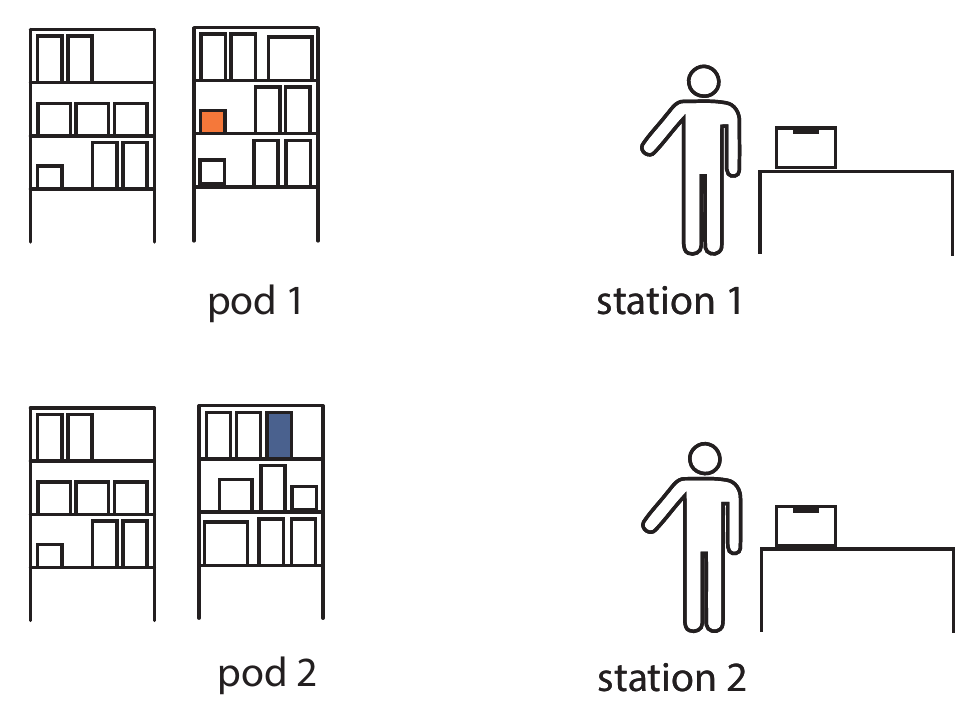}
		\newline \newline \newline \newline \newline \newline \newline \newline 
		\caption{Initial state \newline \newline \newline \newline \newline}	
		\label{fig:t_0_split}
	\end{subfigure}
	\rulesep
	\begin{subfigure}[b]{0.3\textwidth}
		\includegraphics[width = \textwidth]{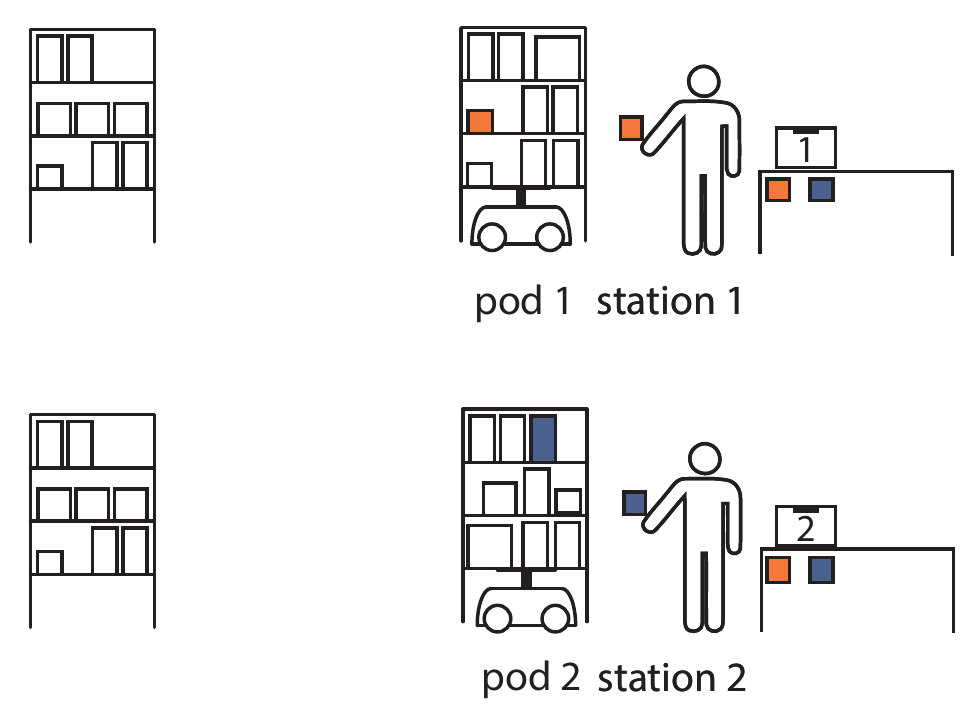}
		\noindent\rule{\textwidth}{0.2pt}
		\includegraphics[width = \textwidth]{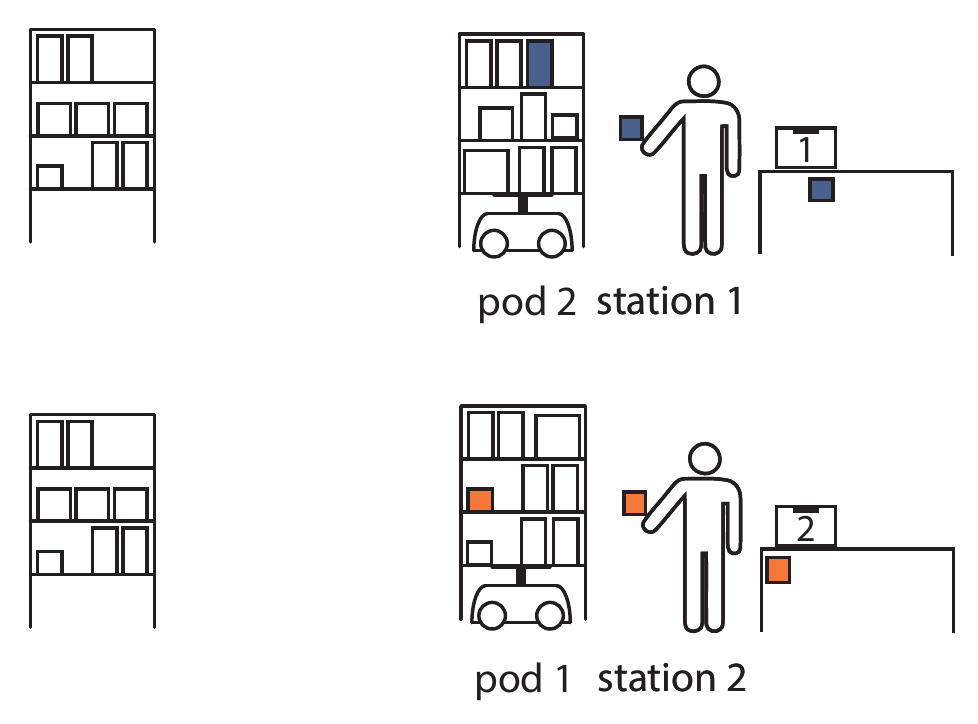}
		\caption{Without split orders: first, pod 1 $\rightarrow$ station 1 and pod 2 $\rightarrow$ station 2 (upper part); then, pod 2 $\rightarrow$ station 1 and pod 1 $\rightarrow$ station 2 (lower part)}	
		\label{fig:t_1_without_split}
	\end{subfigure}
	\rulesep
	\begin{subfigure}[b]{0.3\textwidth}
		\includegraphics[width = \textwidth]{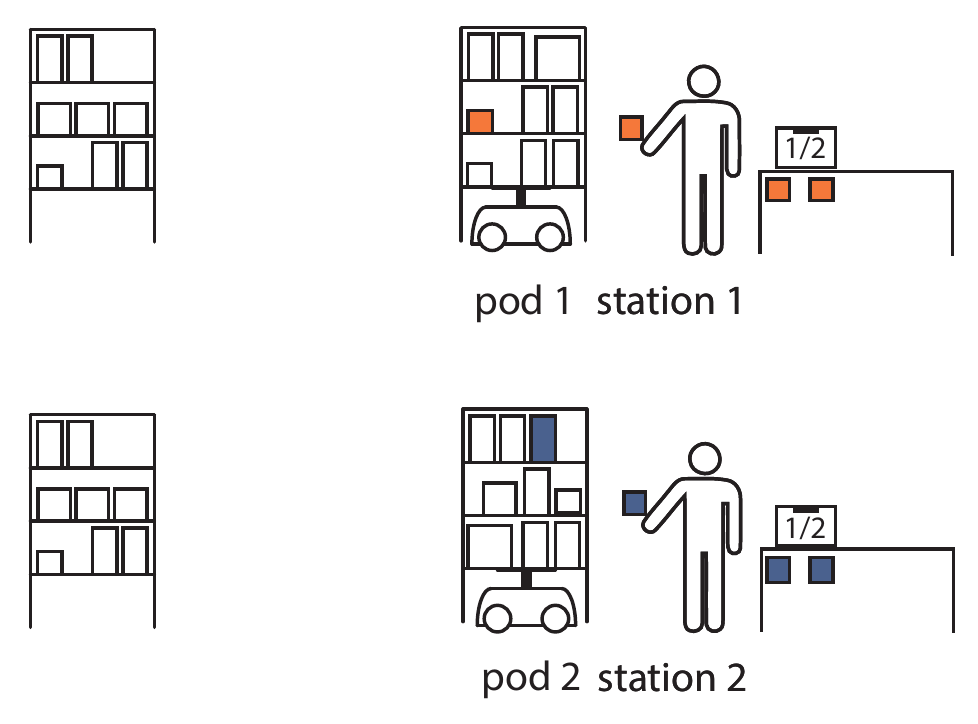}
		\newline \newline \newline \newline \newline \newline \newline \newline 
		\caption{With split orders: pod 1 $\rightarrow$ station 1 and pod 2 $\rightarrow$ station 2 \newline \newline \newline} 
		\label{fig:t_1_with_split}
	\end{subfigure}
	\caption{A solution for without and with split orders.}
	\label{fig:example_splitting}
\end{figure}

However, the split orders might cause additional waiting time for consolidation in packing stations, where customers' orders are packed and ready for shipping. Therefore, we analyze the turn-over times of orders -- the time from when the order arrives to when it is completely picked -- in Section~\ref{sec:results}. 
\subsection{Paper structure}
This paper is organized as follows: In the next section, we describe the RMFS and operational decision problems in detail. In Section~\ref{sec:model}, a mathematical model of integrated POA and PPS and the extensions with split orders are described. We present simulation evaluations in Section~\ref{sec:results}. Finally, we draw conclusions and give pointers for further research in Section~\ref{sec:conclusions}.

\section{Problem description} \label{description}
In this section, we first describe the RMFS, and the decision problems in an RMFS. After that, we explain the objective of integrated POA and PPS in this paper. Finally, we explain the difference between sequential and integrated POA and PPS with an example. 

\subsection{RMFS} \label{subsec:rmfs}
Firstly, we need to define some terms related to orders before explaining the processes in an RMFS, as follows:
\begin{itemize}
	\item stock keeping unit (\textit{SKU})
	\item an \textit{order line} consists of one SKU with the ordered quantity 
	\item an \textit{item} is a physical unit of one SKU
	\item a \textit{pick order} includes a set of order lines from a customer's order
	\item a \textit{split order} is a pick order that is separated into several parts
	\item a \textit{replenishment order} consists of a number of physical units of one SKU
	\item a \textit{backlog} includes all unfulfilled orders.
\end{itemize}
The central components of an RMFS are:
\begin{itemize}
	\item movable shelves, called \textit{pods}, on which inventory is stored
	\item \textit{storage area} denoting the inventory area where the pods are stored
	\item workstations, where 
	\begin{itemize}
		\item the pick order items are picked from pods by pickers (\textit{picking stations}) or
		\item the pick order items are packed by packers and the split orders are consolidated (\textit{packing stations}) or
		\item replenishment orders are stored to pods (\textit{replenishment stations})
	\end{itemize}
	\item mobile \textit{robots}, which can move underneath pods and carry them to workstations
	\item \textit{conveyors} between picking and packing stations to transport the pick orders to be packed.
\end{itemize}

The pods are transported by robots between the inventory area and workstations. Figure \ref{fig:storage_retrieval_process} shows the central process in an RMFS from replenishment to packing: 
\begin{itemize}
	\item \textit{Retrieval process}: After the arrival of a replenishment order, robots carry selected pods to a replenishment station to store units in pods. Similarly, after receiving a pick order, robots carry selected pods to a picking station, where the items for the order lines are picked. Note that in order to fulfill pick orders, several pods may be needed, since orders may have multiple lines. The items in (parts of) an order are picked into a tote.
	\item \textit{Storage process}: After a pod has been processed at one or more stations, it is brought back to a storage location in the storage area. The retrieval and storage processes are based on \cite{Hoffman.2013}.
	\item \textit{Transport to packing stations}: Once a tote is filled, it is transported by a conveyor to packing stations for packing.
	\item \textit{Packing process}: If all items in an order are contained in a tote, packers are prompted by computer to select the correct-sized box and pack the items. A split order has items delivered via multiple totes, since the items are picked by different pickers (picking stations). In this case, packers first sort items from a tote to a correct-sized box on the shelf so that the items from that order are grouped together. We use the term \textit{shelf} to clarify that they might be different to the pods, since they don't need to be moved. Once all the items of a split order are in a box, the packer packs the box, and a space is open for the next split order. This packing process is based on the packing process in Amazon (see \cite{website:packing}). 
\end{itemize}   
\begin{figure}
	\centering
	\includegraphics[width=\textwidth]{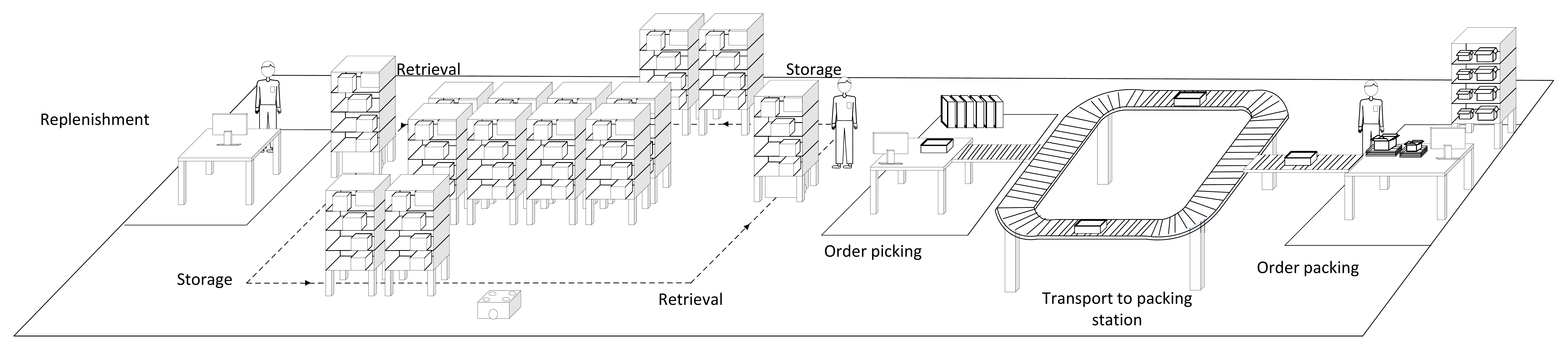}
	\caption{The central process of an RMFS.} 
	\label{fig:storage_retrieval_process}
\end{figure}
\subsection{Operational decision problems}
In an RMFS environment, various optimization and allocation problems have to be solved in real time. Note that there are also planning problems in an RMFS (see \cite{azadeh2017robotized} for an overview). The following operational decision problems were described in \cite{Wurman.2008} and \cite{RawSim.1}:

\begin{itemize}
	\item \textit{Order Assignment}
	\begin{itemize}
		\item \textbf{Replenishment Order Assignment (ROA)}: assignment of replenishment orders to replenishment stations
		\item \textbf{Pick Order Assignment (POA)}: assignment of pick orders to picking stations
	\end{itemize}
	\item \textit{Task Creation}
	\begin{itemize}
		\item \textit{Pod Selection}
		\begin{itemize}
			\item \textbf{Replenishment Pod Selection (RPS)}: selection of pods to store a replenishment order at a replenishment station
			\item \textbf{Pick Pod Selection (PPS)}: selection of pods to use for picking the pick orders assigned at a picking station
		\end{itemize}
		\item \textbf{Pod Repositioning (PR)}: assignment of an available storage location to a pod that needs to be brought back to the inventory area
	\end{itemize}
	\item \textbf{Task Allocation (TA)}: assignment of tasks from \textit{Task Creation} and additional tasks such as idling to robots
	\item \textbf{Path Planning (PP)}: planning of the paths for the robots to execute.
\end{itemize}

We simulate operational problems in our simulation framework RAWSim-O, except ROA and RPS in the replenishment process, since we concentrate in this paper on the picking process. And it is useful to see the effects that the methods have on the picking process without the replenishment process. Furthermore, the decision rules we apply for each decision problem are described in Section~\ref{subsec:rules}.
\subsection{Objective of POA and PPS} \label{subsec:obj}

As mentioned in Section~\ref{sec:intro}, we consider an efficient order picking system -- a system using minimal time to handle more orders -- as suggested in \cite{van2018designing} for picker-to-parts systems. We understand the efficient order picking system in an RMFS is a system to keep pickers busy by providing pods to pick from. In order to achieve that, we want to reduce the idle time between changes of pods; therefore, we aim at minimizing the number of visits by pods to stations (in short: \textit{pod-station visits}) for given sets of orders for the POA and PPS problems in this paper. We expect that a reduction in pod-station visits will increase pile-on (the number of picks per handled pod) while decreasing the distances driven by robots. So the operating costs of an RMFS can be reduced as well, such as electricity and wear and tear of the robots. Moreover, according to the analysis of \cite{Boysen.2017} the number of pod-station visits seems to be a good indicator of the necessary size of the robot fleet. In other words, the fewer pod-station visits we have, the fewer robots we need. So the setup costs of an RMFS can be reduced as well. The details of performance analysis can be found in Section~\ref{sec:results}.

\subsection{An example for sequential and integrated POA and PPS} \label{subsec:example}

In the following we describe Example~\ref{ex:serquencial-vs-integrated}, and explain how to get the minimal pod-station visits solutions for this example from sequential POA and PPS and integrated POA and PPS.

\begin{example}\label{ex:serquencial-vs-integrated}
Figure~\ref{fig:t_0} illustrates a small problem to fulfill four orders 1, 2, 3 and 4. The different colors represent different SKUs. For simplicity, the quantity of each SKU in the orders is one. We have in total two picking stations. There is one empty tote at each station. In this example we assume that each tote can hold three items. Pod 1 is currently at station 1 and pod 2 at station 2, while pods 3 and 4 are in the storage area.
\end{example}
\paragraph{Sequential POA and PPS}
In the sequential POA and PPS, we use the same decision rule, \textit{Pod-Match}, as in \cite{RawSimDecisionRules}, which assigns the orders from the backlog to a station so that the items for the orders best match the pods that are already assigned to that station. Note that there is another more common decision rule in \cite{RawSimDecisionRules} (called \textit{Common-Lines}) and \cite{Wurman.2008} grouping similar orders at picking stations in POA. However, the decision rule Pod-Match for POA is shown to perform better in \cite{RawSimDecisionRules}, since this rule uses information about assigned pods at stations in addition to information about orders in the backlog.
 
In Example~\ref{ex:serquencial-vs-integrated}, in the POA problem we assign orders 2 and 3 to station 1 (Figure~\ref{fig:t_1}), since two of their items can be picked from pod 1 -- the pod that is already at station 1. For the same reason, we assign orders 1 and 4 to station 2.
To fulfill the assigned orders, both pods from the storage area, pods 3 and 4, are needed at each station. After items from pods 1 and 2 are picked, they return to the storage area. In the PPS, pod 3 visits station 1, while pod 4 visits station 2 (Figure~\ref{fig:t_2}). After picking in both stations, pods 3 and 4 switch stations so that the last item of each order can be picked (Figure~\ref{fig:t_4}). In total, 6 pod-station visits were necessary to fulfill both orders in this example, therefore the pile-on can be calculated as 12 picks/6 pods = 2 picks/pod.
\begin{figure}[h]
	\begin{subfigure}[t]{\textwidth}
		\hspace{+0.3cm}
		\includegraphics[width = 0.3\textwidth, left]{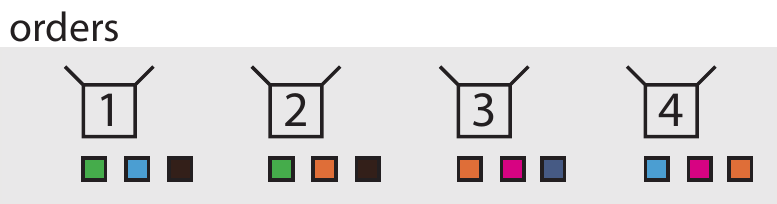}
	\end{subfigure}
	\newline
	\centering	
	\begin{subfigure}[b]{0.4\textwidth}
		\includegraphics[width = \textwidth]{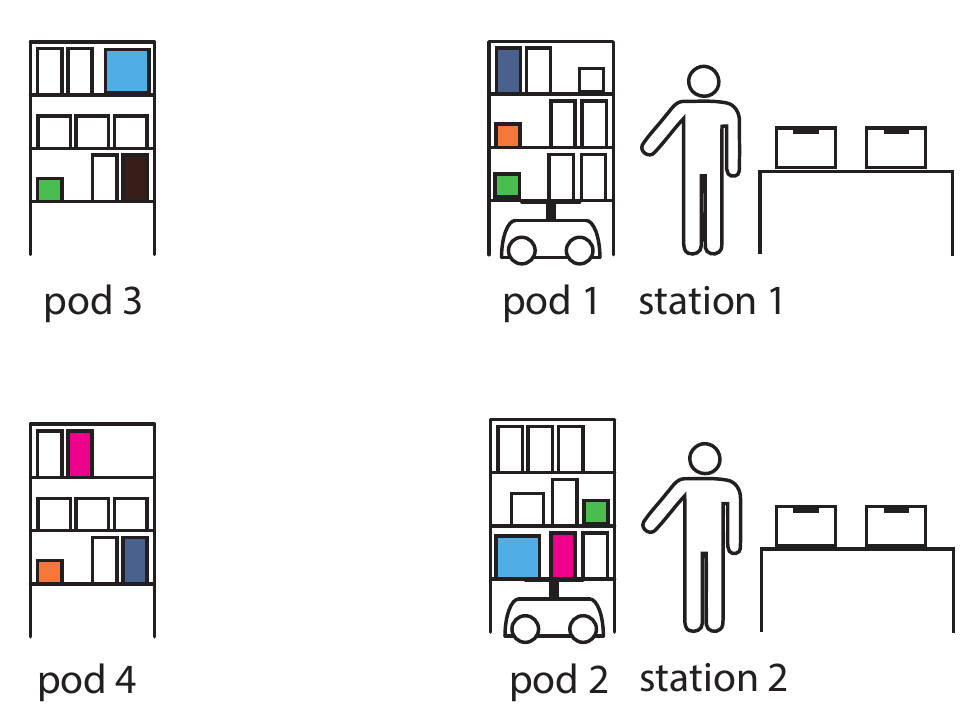}
		\caption{Initial state \newline}	
		\label{fig:t_0}
	\end{subfigure}
	\rulesep
	\begin{subfigure}[b]{0.4\textwidth}
		\includegraphics[width = \textwidth]{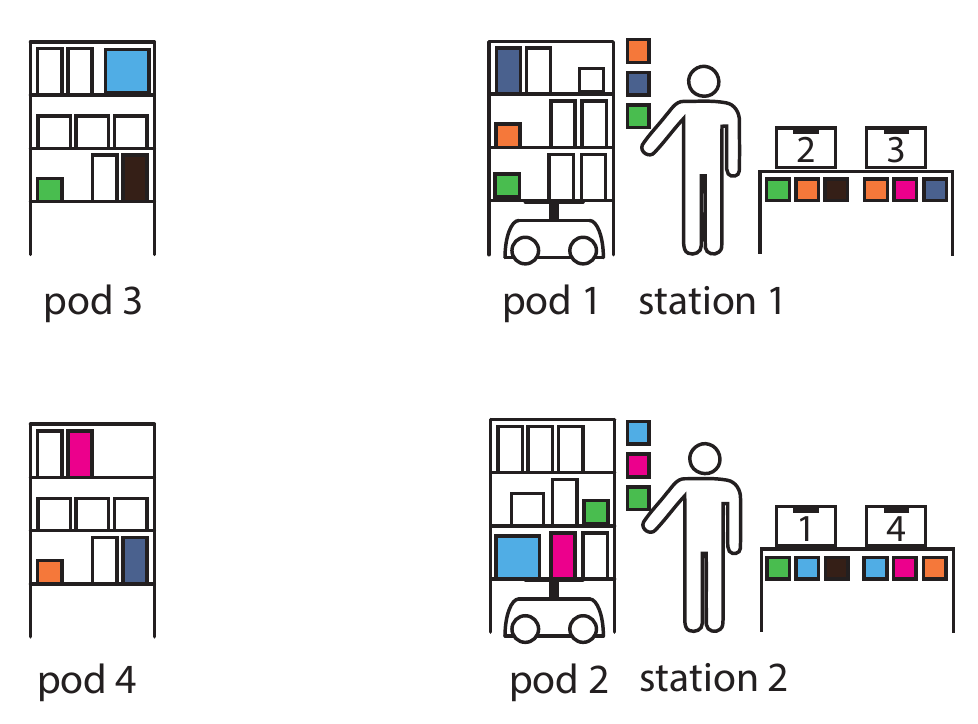}
		\caption{POA: orders 2 and 3 $\rightarrow$ station 1, orders 1 and 4 $\rightarrow$ station 2}	
		\label{fig:t_1}
	\end{subfigure}
	\begin{subfigure}[b]{0.4\textwidth}
		\includegraphics[width = \textwidth]{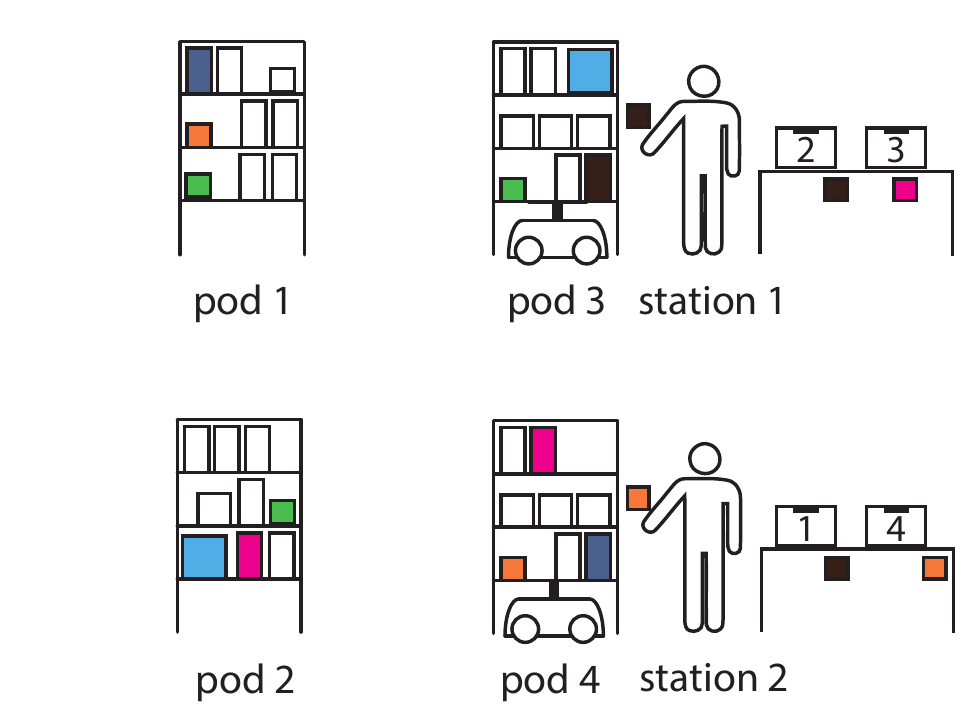}
		\caption{PPS I: pod 3 $\rightarrow$ station 1, pod 4 $\rightarrow$ station 2}	
		\label{fig:t_2}
	\end{subfigure}
	\rulesep
	\begin{subfigure}[b]{0.4\textwidth}
		\includegraphics[width = \textwidth]{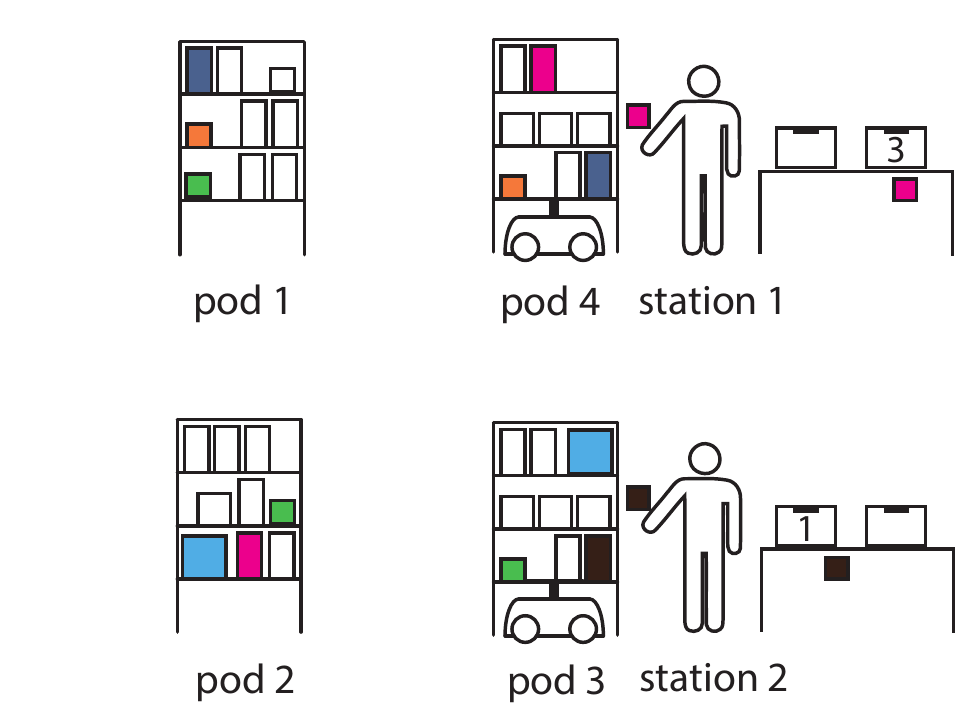}
		\caption{PPS II: pod 4 $\rightarrow$ station 1, pod 3 $\rightarrow$ station 2}	
		\label{fig:t_4}
	\end{subfigure}
	\caption{An example for the sequential POA and PPS.}
	\label{fig:example_sequential}
\end{figure}
\paragraph{Integrated PPS and POA}
In the integrated PPS and POA approach, we have more information while assigning orders to stations, since pods and orders are assigned to stations at the same time. This allows us to find optimal solutions that might not be intuitive at first glance and would not be found by the sequential POA and PPS. Note that we use information about all pods, including assigned ones at stations and unassigned ones in the storage area.

Using the same initial state as in the previous explanation of the sequential POA and PPS in Example~\ref{ex:serquencial-vs-integrated} (see Figure \ref{fig:t_0_integrated}), we integrate these two decisions and assign orders and pods such that the number of pod-station visits is minimized. This leads to the assignment of orders 1 and 2 and pod 3 to station 1 and orders 3 and 4 and pod 4 to station 2 (see Figures~\ref{fig:t_1_integrated} and \ref{fig:t_2_integrated}). This results in a pile-on of 3 (12 picks/4 pods) compared to 2 (12 picks/6 pods) in the sequential example and only requires 4 pod-station visits to fulfill all orders instead of 6. 

\begin{figure}[h]
	\begin{subfigure}{0.3\textwidth}
		\hspace{-2.1cm}
		\includegraphics[width = \textwidth, left]{orders_new.pdf}
	\end{subfigure}
	\newline
	\centering
	\begin{subfigure}[b]{0.4\textwidth}
		\includegraphics[width = \textwidth]{t_0_new.pdf}
		\caption{Initial state \newline \newline \newline}	
		\label{fig:t_0_integrated}
	\end{subfigure}
	\rulesep
	\begin{subfigure}[b]{0.4\textwidth}
		\includegraphics[width = \textwidth]{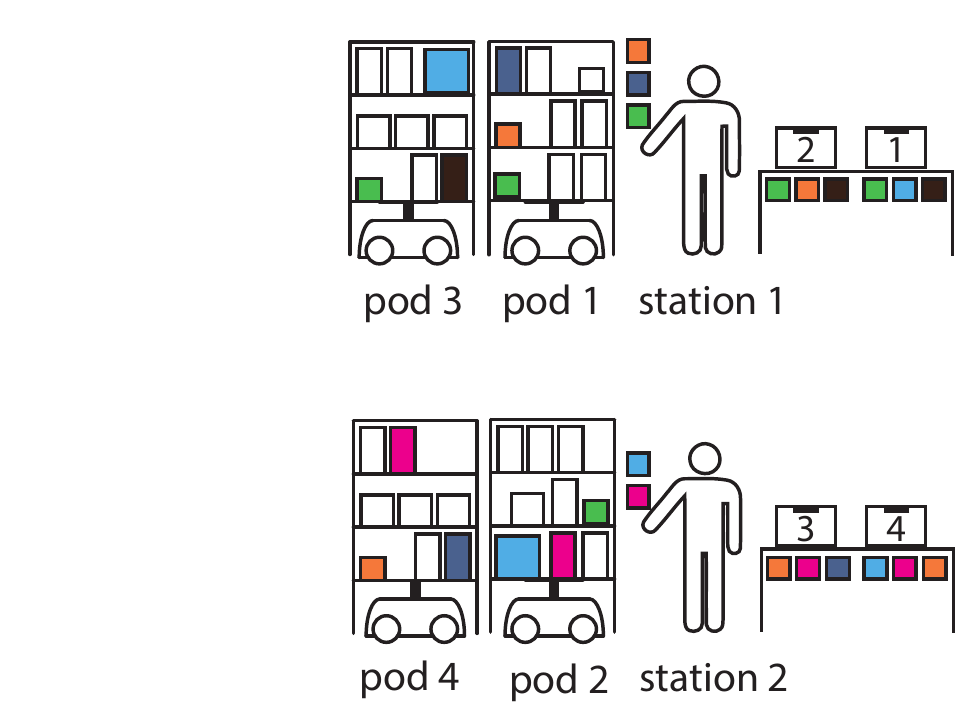}
		\caption{Integrated I: orders 1,2 and pod 3 $\rightarrow$ station 1, orders 3,4 and pod 4 $\rightarrow$ station 2; picking items from the existing pods 1 and 2 visited at stations}	
		\label{fig:t_1_integrated}
	\end{subfigure}
\rulesep
\begin{subfigure}[b]{0.4\textwidth}
	\includegraphics[width = \textwidth]{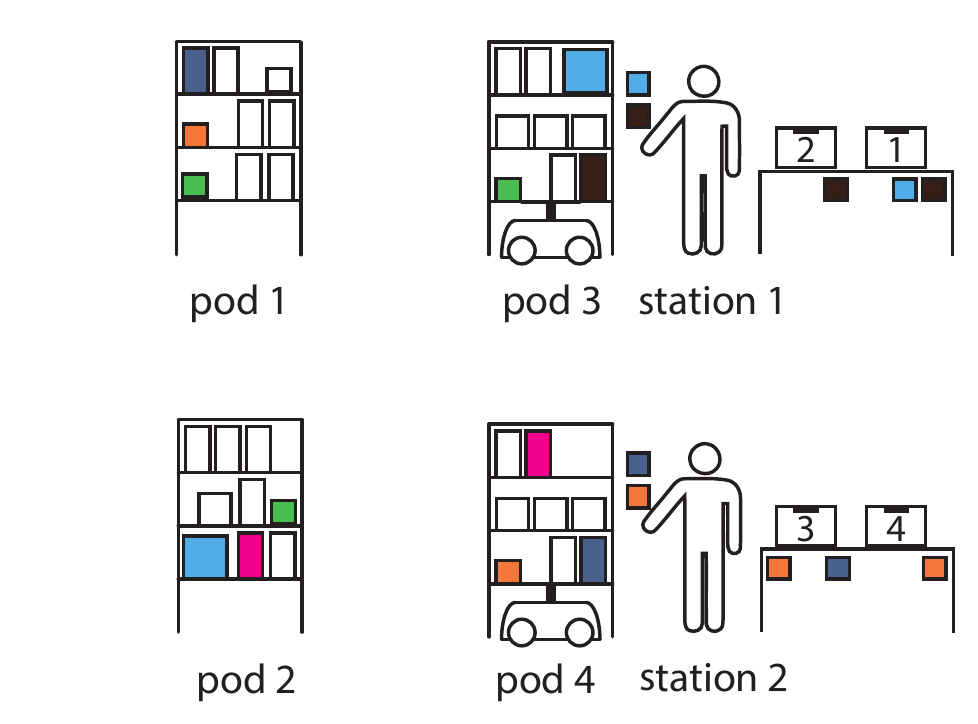}
	\caption{Integrated II: picking items from pod 3 at station 1, while picking items from pod 4 at station 2}	
	\label{fig:t_2_integrated}
\end{subfigure}
	\centering
	\caption{Same example as in Figure~\ref{fig:example_sequential} (see Figure \ref{fig:t_0_integrated}), but the decision is made by the integrated POA and PPS (see Figures \ref{fig:t_1_integrated} and \ref{fig:t_2_integrated}).}
	\label{fig:example_integrated}
\end{figure}

Based on this example, we can see the benefit of integrating POA and PPS by using information about the inventory of all pods in these decisions. Therefore, we present a mathematical model in this paper that integrates POA and PPS for multiple stations and takes information about the inventory of all pods into account.



\section{Mathematical model} \label{sec:model}
In this section, we describe the assumptions in Section~\ref{subsec:assumption} before we present our mathematical model of integrated POA and PPS (we call it the \textit{integrated model}), and extend it with two variants of allowing split orders.

\subsection{Assumption} \label{subsec:assumption}
\paragraph{SKUs} All different SKUs in orders are available in pods. We assume that the quantity of the order line for each SKU is one. This assumption is consistent with common practice, since the number of items per order line is low. If a pod contains a SKU, then we assume that there are enough items in that pod to fulfill all orders for that SKU.

\paragraph{Split order} Splitting an order means separating the original order into two or more parts (up to the number of SKUs in the order). If an order is not split, we ensure that all order lines in that order are assigned for picking at the same station (within a time period). If an order is split, this constraint is relaxed by allowing order lines for that order to be assigned to more than one picking station or more than one time period. There are two variants of a split order:
\begin{description}
	\item[split among stations:] all order lines for a pick order are assigned in the same period but may be assigned to different picking stations (see Example 1 in Section \ref{sec:intro})
	\item[split over timesteps:] order lines for a pick order may be assigned in different time periods and to different picking stations (see an example in \ref{app:example_timesplit}) 
\end{description}

\paragraph{Capacity of a picking station} Commonly, the capacity of a picking station is defined as the number of orders that can be handled at a time (\textit{order capacity}). According to \cite{Wulfraat.2012}, the typical station can support 6 to 12 orders to be picked at a time. The introductory example of split orders shows that traditional order capacity is incompatible with split orders, since simply counting the number of assigned orders does not work anymore when only parts of an order are assigned to the station. Instead, we introduce in this paper a new way to define the capacity of a picking station -- limited by the number of items to be handled at a time -- that works for both, whole orders and split orders. We call this type of capacity \textit{item capacity}. Another advantage of item capacity is a fairer distribution of workload among all stations, since the number of assigning items equals the number of picks. Note that the number of items in each order differs and reflects the number of picks.

\paragraph{Capacity of a packing station} Orders that are not split can be packed directly by packers as soon as they arrive at the packing stations. Split orders require storage space on shelves at packing stations to wait until all parts of the order are picked. Once all parts of a split order are picked, it can be packed and one space on the shelf becomes free for the next split order. The capacity of a packing station is therefore defined as the number of shelves multiplied by the number of boxes which can be stored on a shelf. We set the total capacity of all packing stations to a parameter $C$, and we assume it is large enough for all necessary split orders in this paper. This assumption is supported by the calculation in \ref{app:cal_cap_packing}. In our calculation, up to 78 split orders can be stored on a shelf. And usually, in practice, there is more than one packing station. If more split orders are required, then additional shelves can be installed at packing stations. However, the situation might differ from one company to another. Therefore, our model can be easily extended to support a limited packing capacity, as shown in Section~\ref{subsec:practical}.

\paragraph{Conveyor} We assume that the conveyors between picking and packing stations are long enough to temporarily store orders and parts of them. The conveyors serve as a buffer to synchronize the picking and packing stations.
\paragraph{Maximal order size}We assume that every order in the backlog can fit into some picking station. That means the maximal item capacity of the largest picking station is not smaller than the number of items of the largest order. 
\paragraph{Queue} There is a queue at each station. The space in a queue is limited. Each time a pod leaves the queue, one pod can be added at the end of the queue. Pods leave a station once their inventory cannot be used anymore to fulfill any further assigned orders. 
\paragraph{Period} Once there is enough free item capacity at a station, the time period is changed from $t$ to $t+1$ for all $t \geq 1$. The required amount of free item capacity is defined as the capacity that is needed to fit the smallest available order. In $t=0$, no orders are assigned or picked at any picking station. All pods are in the storage area, so there are no pods waiting at picking stations. At $t=1$ we start to assign orders from the backlog and pods to picking stations. Each time $t$ changes, the current situation in the warehouse (such as which pods are currently in storage or on their way to stations, free capacity at stations, inventory of pods, order backlog) is updated and used to compute the next decisions. This way, we can handle errors or delays in the execution of previous decisions. The model described in Section \ref{sec:model} is solved in each period $t$ using information about the current state of the warehouse.

\paragraph{Shared storage policy} Items of the same SKU are randomly spread over multiple pods. 
In \cite{Boysen.2017}, where this policy is called \textit{mixed-shelves storage}, the authors showed that this policy is efficient.

\paragraph{Pods selection} Our model computes the smallest possible set of pods to fulfill all assigned orders at each station in each period, without considering the distance between the selected pods and picking stations. 

\paragraph{Sequencing of pods} During each period $t$ we need to know the sequence of pods at each station. As our model only calculates the optimal sets of pods for each station, we use the following policy to create a sequence of pods: Robots begin immediately to carry all assigned pods to the respective
stations and the sequencing of pods is decided by the order of their arrival at the station. This ensures that station idle times are kept at a minimum. 



\subsection{Integrated model} \label{subsec:integtrated}
Firstly, we define the notation for the following model.
\newcommand{\SetOfPodsByStation}[1]{\mathcal{P}_{#1}}
\newcommand{\SetOfPodsBySku}[1]{\mathcal{P}_{#1}^\text{SKU}}

\textbf{Sets}:\\
\begin{tabularx}{\textwidth}{>{\hsize=0.45\hsize}X >{\hsize=1.55\hsize}X}
	$\mathcal{P}$ & Set of pods \\
	$\mathcal{S}$ & Set of currently available picking stations \\
	$\mathcal{P}_s$ & Set of pods $\mathcal{P}_s \subseteq \mathcal{P}$ that are currently at station $s$\\
	$\SetOfPodsBySku{i}$ & Set of pods 	$\SetOfPodsBySku{i} \subseteq \mathcal{P}$ that include SKU $i$\\
	$\mathcal{O}$ & Set of current orders in the backlog \\
	$\mathcal{I}_o$ & Set of SKUs $\mathcal{I}_o \subseteq \mathcal{I}$ that constitutes an order $o \in \mathcal{O}$ \\
\end{tabularx}

\textbf{Parameters}:\\
\begin{tabularx}{\textwidth}{>{\hsize=0.45\hsize}X >{\hsize=1.55\hsize}X}
	$C_s$ & Current capacity of each picking station $s \in \mathcal{S}$
\end{tabularx}

\textbf{Decision variables}:\\
\begin{tabularx}{\textwidth}{>{\hsize=0.2\hsize}X >{\hsize=1.8\hsize}X}
	$x_{ps}$ & 
	$
	\left\{
	\begin{array}{cl}
	1 , & \mbox{pod $p \in \mathcal{P}$ is assigned to station $s \in \mathcal{S}$}\\
	0 , & \mbox{else}
	\end{array}
	\right.
	$
	\\
	$y_{os}$ & 
	$
	\left\{
	\begin{array}{cl}
	1 , & \mbox{order $o \in \mathcal{O}$ is assigned to station $s \in \mathcal{S}$}\\
	0 , & \mbox{else}
	\end{array}
	\right.
	$
	\\
	$y_{ios}$ & 
	$
	\left\{
	\begin{array}{cl}
	1 , & \mbox{SKU $i \in \mathcal{I}_o$ of order $o \in \mathcal{O}$ is assigned to station $s \in \mathcal{S}$}\\
	0 , & \mbox{else}
	\end{array}
	\right.
	$
	\\
	$u_s$ & Amount of unused capacity for a station $s \in \mathcal{S}$  \\
\end{tabularx}
\\

The integrated model is invoked in the simulation each time the time period $t$ is changed. However, for simplicity the parameter $t$ is not used in the model. Note that all sets, parameters and decision variables may change for each time period $t$.

\begin{align}
\text{Min} \quad  & \sum_{p \in \mathcal{P}}\sum_{s \in \mathcal{S}} x_{ps} + \sum_{s \in \mathcal{S}}W_u\cdot u_s 
\label{eq:oap-1} \\
\text{s.t.} \quad 
& y_{os} = y_{ios}, \; \forall i \in \mathcal{I}_o, o \in \mathcal{O}, s \in \mathcal{S} \label{eq:oap-3}\\
& \sum_{s \in \mathcal{S}} y_{os} \leq 1, \; \forall o \in \mathcal{O} \label{eq:oap-4}\\
& 
\sum\limits_{o \in \mathcal{O}} \sum\limits_{i \in \mathcal{I}_o} y_{ios} + u_s= C_s, \; \forall s \in \mathcal{S} \label{eq:oap-5} \\
& \sum_{p \in \SetOfPodsBySku{i}} x_{ps} \geq y_{ios}, \; \forall i \in \mathcal{I}_o, o \in \mathcal{O}, s\in \mathcal{S} \label{eq:oap-9} \\
& x_{ps} = 1, \; \forall p \in \mathcal{P}_s, s \in \mathcal{S} \label{eq:oap-10} \\
& y_{os} \in \lbrace 0, 1 \rbrace, \; \forall o \in \mathcal{O}, s \in \mathcal{S} \label{eq:oap-12}\\
& y_{ios} \in \lbrace 0, 1 \rbrace, \; \forall i \in \mathcal{I}_o, o \in \mathcal{O}, s \in \mathcal{S} \label{eq:oap-13}\\
& x_{ps} \in \lbrace 0, 1 \rbrace, \; \forall p \in \mathcal{P}, s \in \mathcal{S} \label{eq:oap-14} \\
& u_s \in \mathbb{Z} \geq 0, \; \forall s \in \mathcal{S} \label{eq:oap-16}
\end{align}

In the integrated model given above, we aim at minimizing the number of pods used in each period, while keeping the unused capacity of stations as low as possible.
Constraint set \eqref{eq:oap-3} sets the value of $y_{os}$ to 1 for all assigned order/station tuples and ensures that, if an order is assigned to a station, all order lines in the order are assigned to the same station as this order. And if an order is not assigned to any station, none of its order lines can be assigned to a station. Constraint set \eqref{eq:oap-4} ensures that each order can be assigned to at most one station. Constraint set \eqref{eq:oap-5} ensures that the number of assigned items equals the amount of available capacity minus the amount of unused capacity $u_s$ at each station $ s \in \mathcal{S}$. Each $u_s$ increases the value of the cost function (assuming $W_u > 0$). Constraint set \eqref{eq:oap-9} ensures that for each order line of an order that is assigned to a station, at least one pod $p \in \mathcal{P}_i$ is assigned to the same station. Constraint set \eqref{eq:oap-10} ensures that pods that were assigned in previous periods and are currently on their way to a station or in a station's queue stay assigned to that station. Constraint sets \eqref{eq:oap-12}--\eqref{eq:oap-14} ensure that the respective variables can only have binary values while \eqref{eq:oap-16} ensures that $u_s$ can only have non-negative integer values.
\begin{prop}\label{prop:existance-of-non-split}
	The integrated model always has a feasible solution.
\end{prop}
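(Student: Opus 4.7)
The plan is to exhibit the obvious trivial assignment and verify that all constraints are satisfied. Concretely, I would set
\begin{align*}
y_{os} &= 0 \quad \forall o \in \mathcal{O},\, s \in \mathcal{S}, \\
y_{ios} &= 0 \quad \forall i \in \mathcal{I}_o,\, o \in \mathcal{O},\, s \in \mathcal{S}, \\
u_s &= C_s \quad \forall s \in \mathcal{S},
\end{align*}
and then force $x_{ps} = 1$ exactly for $p \in \mathcal{P}_s$ (and $x_{ps} = 0$ otherwise), which is the unique choice compatible with constraint set~\eqref{eq:oap-10}.

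Next I would simply walk through the constraints and verify each one. Constraint~\eqref{eq:oap-3} becomes $0 = 0$; constraint~\eqref{eq:oap-4} reduces to $0 \leq 1$; constraint~\eqref{eq:oap-5} reduces to $0 + C_s = C_s$; constraint~\eqref{eq:oap-9} has a non-negative left-hand side and a zero right-hand side; constraint~\eqref{eq:oap-10} holds by construction; and the integrality/non-negativity conditions~\eqref{eq:oap-12}--\eqref{eq:oap-16} are immediate since $C_s \in \mathbb{Z}_{\geq 0}$. This candidate assignment is therefore feasible.

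The one point that deserves an explicit remark is the consistency of constraint~\eqref{eq:oap-10}. Because a pod physically resides at a single station at a time, the sets $\mathcal{P}_s$ are pairwise disjoint, so setting $x_{ps} = 1$ for $p \in \mathcal{P}_s$ does not create a conflict with another station-assignment of the same pod. I do not foresee a real obstacle in the argument; the only subtlety is making the point that the trivial "assign nothing" solution remains admissible precisely because $u_s$ was introduced as a slack variable that absorbs the entire capacity $C_s$ when no orders are placed. In particular, even if $\mathcal{O} = \emptyset$ or if no single order fits into any station (which is ruled out by the maximal-order-size assumption anyway), the solution above is still feasible, which is what the proposition asserts.
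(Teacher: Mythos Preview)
Your argument is correct: the ``assign nothing'' solution with $y_{os}=y_{ios}=0$, $u_s=C_s$, and $x_{ps}=\mathbf{1}\{p\in\mathcal{P}_s\}$ satisfies every constraint of the integrated model, and your consistency remark about the $\mathcal{P}_s$ being pairwise disjoint is the right observation to justify \eqref{eq:oap-10}.

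This is, however, a genuinely different construction from the paper's. The paper does not use the empty assignment; instead it invokes the \emph{Maximal order size} assumption to pick an order $o'$ that fits into some station $s'$, sets $y_{o's'}=1$ and $y_{io's'}=1$ for all $i\in\mathcal{I}_{o'}$, and then activates all pods $p\in\SetOfPodsBySku{i}$ for those SKUs at $s'$. Your route is more elementary and, as you note, does not depend on the maximal-order-size assumption at all, so it establishes feasibility under strictly weaker hypotheses (it works even when $\mathcal{O}=\emptyset$ or when no order fits any station). What the paper's construction buys is a feasible point in which at least one order is actually processed, which is closer in spirit to how the model is used operationally (the period only triggers when some order can fit); but for the bare statement of the proposition your argument is both sufficient and cleaner.
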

\begin{proof}
	See \ref{app:proofs}.
\end{proof}



\subsection{Split-among-stations model} \label{subsec:splitmodel}
This model is an extension of the integrated model from the previous subsection.  
Now we allow splitting the SKUs in an order between two or more stations. We need the following additional decision variables.\\
\textbf{Additional decision variables}:\\
\begin{tabularx}{\textwidth}{>{\hsize=0.2\hsize}X >{\hsize=1.8\hsize}X}
	$y_{o}$ & 
$
\left\{
\begin{array}{cl}
1 , & \mbox{order $o \in \mathcal{O}$ is assigned }\\
0 , & \mbox{else}
\end{array}
\right.
$
\\
$e_o$ & the number of additional assigned picking stations for an order $o \in \mathcal{O}$ \\
\end{tabularx}
\begin{align}
\text{Min} \quad  & \sum_{p \in \mathcal{P}}\sum_{s \in \mathcal{S}} x_{ps} + \sum_{s \in \mathcal{S}}W_u\cdot u_s 
\tag{\ref{eq:oap-1}}\label{eq:oap-21} \\
\text{s.t.} \quad 
&\eqref{eq:oap-5}-\eqref{eq:oap-16} \nonumber \\
& y_{os} \geq  y_{ios}, \; \forall i \in \mathcal{I}_o, o \in \mathcal{O}, s \in \mathcal{S} \tag{\ref{eq:oap-3}.1} \label{eq:oap-23}\\
& \sum_{s \in \mathcal{S}} y_{os} \geq {y}_o, \; \forall o \in \mathcal{O} \tag{\ref{eq:oap-4}.1}\label{eq:oap-24}\\
& \sum\limits_{s \in \mathcal{S}} y_{ios} = y_{o}, \; \forall i \in \mathcal{I}_o, o \in \mathcal{O} \label{eq:oap-26} \\
& y_{o} \geq  y_{os}, \; \forall o \in \mathcal{O}, s \in \mathcal{S} \label{eq:oap-2}\\
& \sum\limits_{i \in \mathcal{I}_o} y_{ios} \geq  y_{os}, \; \forall o \in \mathcal{O}, s \in \mathcal{S} \label{eq:oap-27} \\
& y_{o} \in \lbrace 0, 1 \rbrace, \; \forall o \in \mathcal{O} \label{eq:oap-11}
\end{align}

The cost function \eqref{eq:oap-1} and constraints \eqref{eq:oap-5}--\eqref{eq:oap-16} are carried over from the previous model. Constraint set \eqref{eq:oap-3} is relaxed, so that the order lines for an order don't have to be assigned to the same station anymore (see constraint set \ref{eq:oap-23}), but $y_{os}$ is still set to 1 for all stations the order is assigned to. Constraint set \eqref{eq:oap-4} is also relaxed, to allow for the assignment of an order to more than one station (see constraint set \ref{eq:oap-24}).
Constraint set \eqref{eq:oap-26} now ensures that if an order is active (at least one order line is assigned to a station), all of its order lines have to be assigned to stations. Constraint set \eqref{eq:oap-2} sets the value of $y_{o}$ to 1 for each order that is assigned to at least to a station. 
Constraint set \eqref{eq:oap-27} ensures that an order can only be assigned to a station if at least one order line of the order is assigned to that station. 
Constraint set \eqref{eq:oap-11} ensures that $y_o$ is a binary variable for each $o\in \mathcal{O}$.


\begin{prop}\label{prop:existance-of-station-split}
Every solution of the integrated model also solves the split-among-stations model.
There is always a feasible solution of the split-among-stations model.
The split-among-stations model always provides a solution that is better than, or equally good as, the integrated model.
\end{prop}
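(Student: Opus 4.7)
The plan is to establish the three claims of Proposition~\ref{prop:existance-of-station-split} in sequence, with the first claim doing most of the work and the second and third following almost immediately. The key observation is that the split-among-stations model is a relaxation of the integrated model augmented by a bookkeeping variable $y_o$; essentially every constraint \eqref{eq:oap-23}--\eqref{eq:oap-27} either weakens an integrated constraint or encodes the new variable's defining property. So the proof strategy is constructive: take any feasible solution of the integrated model and exhibit a lifting to a feasible solution of the split model with identical objective value.

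First I would take an arbitrary feasible solution $(x_{ps}, y_{os}, y_{ios}, u_s)$ of the integrated model and define $y_o := \sum_{s \in \mathcal{S}} y_{os}$ for every $o \in \mathcal{O}$. By the integrated constraint \eqref{eq:oap-4}, this sum is $0$ or $1$, so $y_o \in \{0,1\}$ and \eqref{eq:oap-11} is satisfied. The constraint sets \eqref{eq:oap-5}--\eqref{eq:oap-16} are inherited verbatim. The remaining verification is a checklist: \eqref{eq:oap-23} follows from integrated \eqref{eq:oap-3} with equality; \eqref{eq:oap-24} is equality by the definition of $y_o$; \eqref{eq:oap-26} holds because in the integrated model $y_{ios}=y_{os}$ for all $i\in\mathcal{I}_o$, so $\sum_s y_{ios} = \sum_s y_{os} = y_o$; \eqref{eq:oap-2} holds because $y_o = \sum_{s'} y_{os'} \geq y_{os}$ as all summands are non-negative; and \eqref{eq:oap-27} follows from $y_{ios}=y_{os}$ since $|\mathcal{I}_o| \geq 1$ for every $o\in\mathcal{O}$. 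This establishes the first claim.

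The second claim is then immediate: by Proposition~\ref{prop:existance-of-non-split} the integrated model has a feasible solution, and by the first claim that solution (extended by the derived $y_o$) is feasible for the split-among-stations model. The third claim also follows: the cost function \eqref{eq:oap-1} depends only on variables $x_{ps}$ and $u_s$, which are carried over unchanged in the lifting, so any optimum of the integrated model is achieved by a feasible point of the split model; since the split model's feasible region contains (the lifting of) the integrated model's feasible region, its optimum is no worse.

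The main obstacle is really just the careful bookkeeping of the new variable $y_o$ and the verification that every relaxed constraint is satisfied with equality or slack by the integrated solution; there is no genuinely difficult step, because the split-among-stations model is designed as a structural relaxation. The only mild subtlety is observing that \eqref{eq:oap-27} uses the implicit assumption that every order has at least one SKU, which is built into the problem description and requires no argument.
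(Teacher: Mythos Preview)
Your proposal is correct and takes essentially the same approach as the paper's proof: lift an integrated solution by defining the bookkeeping variable $y_o$, verify that each of the relaxed constraints \eqref{eq:oap-23}--\eqref{eq:oap-27} is satisfied, then invoke Proposition~\ref{prop:existance-of-non-split} and the shared objective \eqref{eq:oap-1} for the remaining two claims. If anything, your argument is more explicit than the paper's, which does not spell out the definition $y_o:=\sum_{s}y_{os}$ and additionally sets $e'_o:=0$ for a declared variable that in fact appears in no constraint of the split-among-stations model as stated.
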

\begin{proof}
See \ref{app:proofs}.
\end{proof}


\subsection{Split-over-time model}
\label{subsec:timesplitmodel}
This model is an extension of the split-among-stations model from the previous subsection. Here we also allow every order to be split over different time periods. This means some SKUs for an order may be assigned in one period while the others will stay in the backlog to be assigned in later periods. We define one additional binary variable.\\
\textbf{Additional decision variable}:\\
\begin{tabularx}{\textwidth}{>{\hsize=0.2\hsize}X >{\hsize=1.8\hsize}X}
$y^b_{io}$ & 
$
\left\{
\begin{array}{cl}
1 , & \mbox{SKU $i \in \mathcal{I}_o, o \in \mathcal{O}$ is moved back to the backlog}\\
0 , & \mbox{else}
\end{array}
\right.
$
\\
\end{tabularx}
\begin{align}
\text{Min} \quad & \sum_{p \in \mathcal{P}}\sum_{s \in \mathcal{S}} x_{ps} + \sum_{s \in \mathcal{S}}W_u\cdot u_s 
\tag{\ref{eq:oap-1}}\label{eq:oap-31}\\
\text{s.t.} \quad 
& \eqref{eq:oap-23}-\eqref{eq:oap-24}, \eqref{eq:oap-5}-\eqref{eq:oap-16}, \eqref{eq:oap-2}-\eqref{eq:oap-215} \nonumber \\
& \sum\limits_{s \in \mathcal{S}} y_{ios} + y^b_{io} = y_{o}, \; \forall i \in \mathcal{I}_o, o \in \mathcal{O} \tag{\ref{eq:oap-26}.1} \label{eq:oap-36} \\
& y^b_{io} \in \lbrace 0, 1 \rbrace, \; \forall i \in \mathcal{I}_o, o \in \mathcal{O} \label{eq:oap-317} 
\end{align}

All constraints from the previous model are carried over, except for constraint set \eqref{eq:oap-26}.  
Constraint set \eqref{eq:oap-26} is relaxed to allow not assigning all order lines of an order at once (see constraint set \eqref{eq:oap-36}).
The new constraint set \eqref{eq:oap-317} ensures that the value of $y^b_{io}$ is binary.

\begin{prop}\label{prop:existance-of-station-time-split}
Every solution of the split-among-stations model also solves the split-over-time model.
There is always a feasible solution of the split-over-time model.
The split-over-time model always provides a solution that is better than, or equally good as, the split-among-stations model.
\end{prop}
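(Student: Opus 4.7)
The plan is to mirror the three-part structure of Proposition \ref{prop:existance-of-station-split}. The argument is essentially that the split-over-time model is a genuine relaxation of the split-among-stations model: the only change between them is replacing constraint set \eqref{eq:oap-26} by the weaker constraint set \eqref{eq:oap-36}, while a new non-negative slack variable $y^b_{io}$ is introduced that does not enter the objective function. Everything else follows from this observation.

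First, for the embedding claim, I would take an arbitrary feasible solution $(x_{ps},y_{os},y_{ios},y_o,e_o,u_s)$ of the split-among-stations model and extend it to a candidate for the split-over-time model by setting $y^b_{io} := 0$ for every $i \in \mathcal{I}_o$ and every $o \in \mathcal{O}$. All constraints carried over verbatim (\eqref{eq:oap-23}--\eqref{eq:oap-24}, \eqref{eq:oap-5}--\eqref{eq:oap-16}, \eqref{eq:oap-2}--\eqref{eq:oap-27}, \eqref{eq:oap-11}) remain satisfied because neither the values of the original variables nor those constraints change. The only thing left to check is the replaced constraint set \eqref{eq:oap-36}: substituting $y^b_{io}=0$ yields $\sum_{s \in \mathcal{S}} y_{ios} = y_o$, which is exactly \eqref{eq:oap-26} of the split-among-stations model and therefore holds by assumption. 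Finally \eqref{eq:oap-317} is satisfied because $0 \in \{0,1\}$. Hence the extended solution is feasible for the split-over-time model, and since the objective function \eqref{eq:oap-1} does not involve $y^b_{io}$, it attains the same objective value.

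Existence of a feasible solution follows immediately by chaining Propositions \ref{prop:existance-of-non-split} and \ref{prop:existance-of-station-split} with the embedding just constructed: the integrated model has a feasible solution, which lifts to a feasible solution of the split-among-stations model, which in turn lifts to a feasible solution of the split-over-time model. For the comparison of optimal values, let $Z^*_{\text{st}}$ and $Z^*_{\text{time}}$ denote the optimal objective values of the two models. Since the embedding shows that every feasible point of the split-among-stations model corresponds to a feasible point of the split-over-time model with equal objective value, we have $Z^*_{\text{time}} \leq Z^*_{\text{st}}$, which is the third claim.

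I do not expect any serious obstacle here, since the argument is a direct analogue of the proof of Proposition \ref{prop:existance-of-station-split}. The only subtle point is being careful to verify that the constraints which were renumbered but actually kept (\eqref{eq:oap-23}, \eqref{eq:oap-24}, \eqref{eq:oap-2}, \eqref{eq:oap-27}) are indeed preserved under the trivial extension, and that the model notation makes clear that $y^b_{io}=0$ is admissible; I would state this explicitly in the proof to leave no room for ambiguity.
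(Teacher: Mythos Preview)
Your proposal is correct and follows essentially the same approach as the paper's proof: set $y^b_{io}:=0$ for all $i,o$, observe that \eqref{eq:oap-36} then collapses to \eqref{eq:oap-26}, and invoke the shared objective \eqref{eq:oap-1} together with the earlier propositions to conclude existence and the optimal-value comparison. The paper's own argument is just a terser version of what you wrote, explicitly noting that the reasoning is analogous to the proof of Proposition~\ref{prop:existance-of-station-split}.
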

\begin{proof}
See \ref{app:proofs}.
\end{proof}

\section{Computational evaluation} \label{sec:results}
In this section, we describe the parameters and results of the computational evaluation. We first describe the open-source simulation framework used for this paper in \Subsecref{framework}. Next, we show the results of the simulation in \Subsecref{results}. In \Subsecref{practical}, we make some remarks regarding the assumptions that were made in our computational evaluation from a practical point of view.
\subsection{Simulation framework} \label{subsec:framework}
In the following evaluation we use RAWSim-O from \cite{RawSim.1}, an open-source, agent-based and event-driven
simulation framework providing a detailed view of an RMFS. The source code is available at \url{www.rawsim-o.de}. Figure~\ref{fig:mu_simulationframework} shows an overview of the simulation process, which is managed by the core \textit{simulator} instance. The tasks of the simulator include:
\begin{itemize}
	\item updating \textit{agents}, which can resemble real entities, such as robots and stations
	\item passing decisions to \textit{optimizers}, which can either decide immediately or buffer multiple requests and release the decision later
	\item exposing information to a \textit{visualizer}, which allows optional visual feedback in 2D or 3D.
\end{itemize}

\begin{figure}[htb]
	\centering
	\includegraphics[width = 0.7\textwidth]{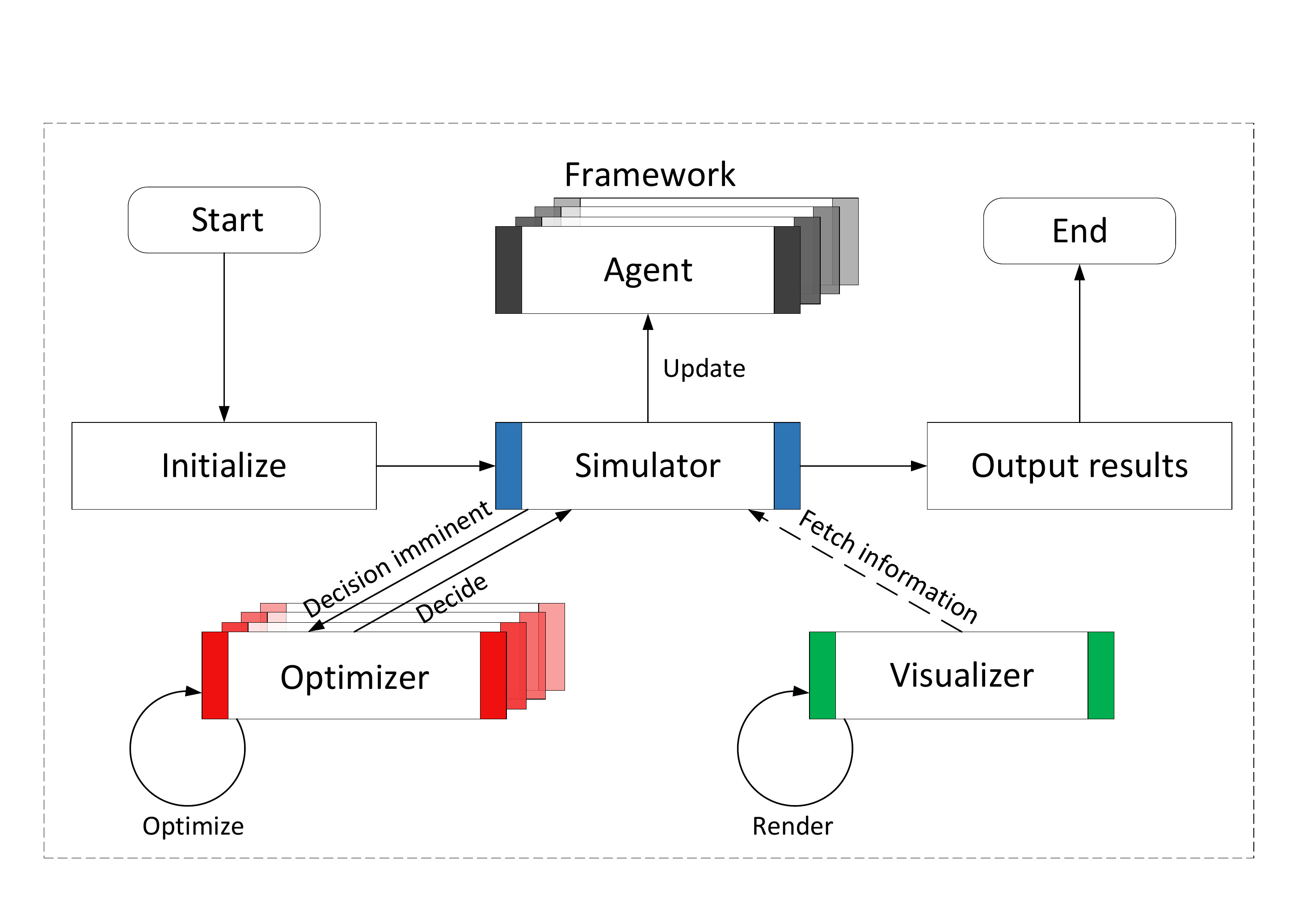}
	\caption{Overview of the simulation process (see \cite{RawSim.1}).}
	\label{fig:mu_simulationframework}
\end{figure}
\vspace{-0.5cm}
The hierarchy of decision problems regarding the assignment of replenishment orders, pick orders and pods to station is illustrated in  Figure~\ref{fig:problemdependencies}. If a replenishment order needs to be assigned to a replenishment station, the optimizers of ROA and RPS are responsible for choosing a replenishment station and a pod. This results in an insertion request, i.e. a request for a robot to bring the selected pod to the given workstation. If a picking order needs to be assigned to a picking station, a newly developed optimizer different to \cite{RawSim.1}, called \textit{POA \& PPS}, submits all necessary information to the model and converts its solution into extraction requests. Extraction requests contain both, the item that needs to be picked and the pod that it should be picked from. Note that in \cite{RawSim.1}, first the POA optimizer is called and the extraction requests are generated; after that, the PPS optimizer is called. In other words, pods are only assigned to stations after orders have already been assigned. Moreover, in the new optimizer, extraction requests for an order can be assigned to different stations to support split orders. The item capacity is the sum of the extract requests at one station for each (part of an) order that has at least one unfulfilled request at that station. Furthermore, the system generates a store request each time a pod needs to be transported back from a station to a storage location, and the PR optimizer decides on the storage location for that pod. The TA optimizer assigns robots to these tasks. All tasks result in robot trips, which are planned by a PP optimizer. 

\begin{figure}[h]
	\centering
	\includegraphics[width=0.85\textwidth]{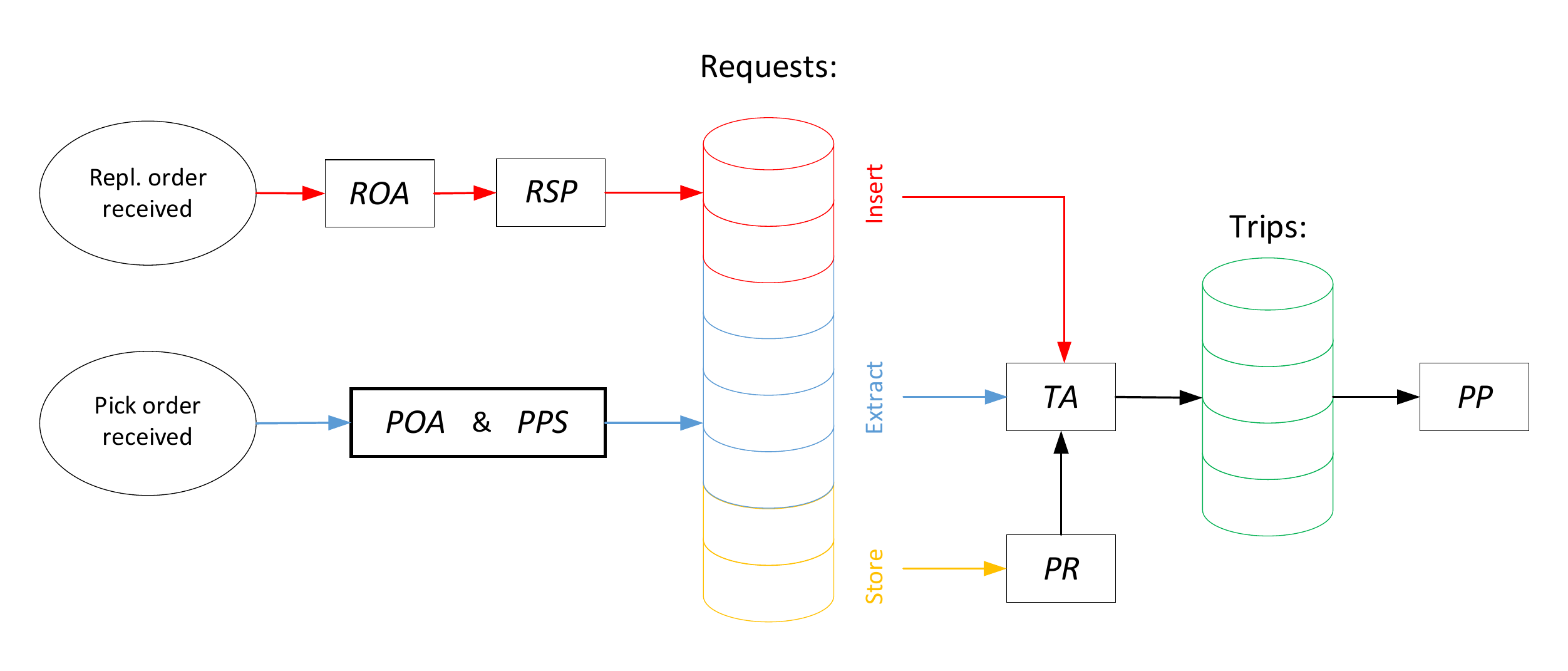}
	\caption{Order of decisions to be made after receiving a pick or replenishment order.}
	\label{fig:problemdependencies}
\end{figure}

The simulation framework conceptually consists of three different inputs: 
\begin{itemize}
	\item instance configuration: contains orders, initial inventory and available SKUs (see Section~\ref{subsec:intance})
	\item layout configuration: determines the characteristics and dimensions of the warehouse layout (see Section~\ref{subsec:layout})
	\item optimizer configuration: specifies the decision rules for all operational problems in an RMFS (see Section~\ref{subsec:rules}).
\end{itemize}
We describe these three inputs for our experiments in this paper in the following sections.
\subsubsection{Instance generation} \label{subsec:intance}
First, we describe how we generate SKUs and orders, and how to fill the pods. The set of SKUs is generated as $I$ = \{$i_1$, ... , $i_{|I|}$\}. For each order $o_1, ..., o_{|O|}$ the number of different SKUs in it is determined by a truncated (1 to $|I|$) geometric distribution with $p = 0.4$ (see Figure~\ref{fig:orderlen} for $|I|=20$). And the number of items for a SKU is set to 1. It is typical in online retailing that most of the orders contain very few line items, such as in \cite{bozer2018simulation} and \cite{onal2017modelling}. The SKUs in the order are then chosen by sampling without replacement from $I$ using the probability mass function of a geometric distribution with $p = 5/|I|$, to account for the varying demand for different SKUs. Figure~\ref{fig:skus} shows the probabilities of SKUs for $|I|=20$.

\begin{figure}[H]
	\centering
	\begin{minipage}{.5\textwidth}
		\centering
		\captionsetup{justification=centering}
		\includegraphics[width=1\linewidth]{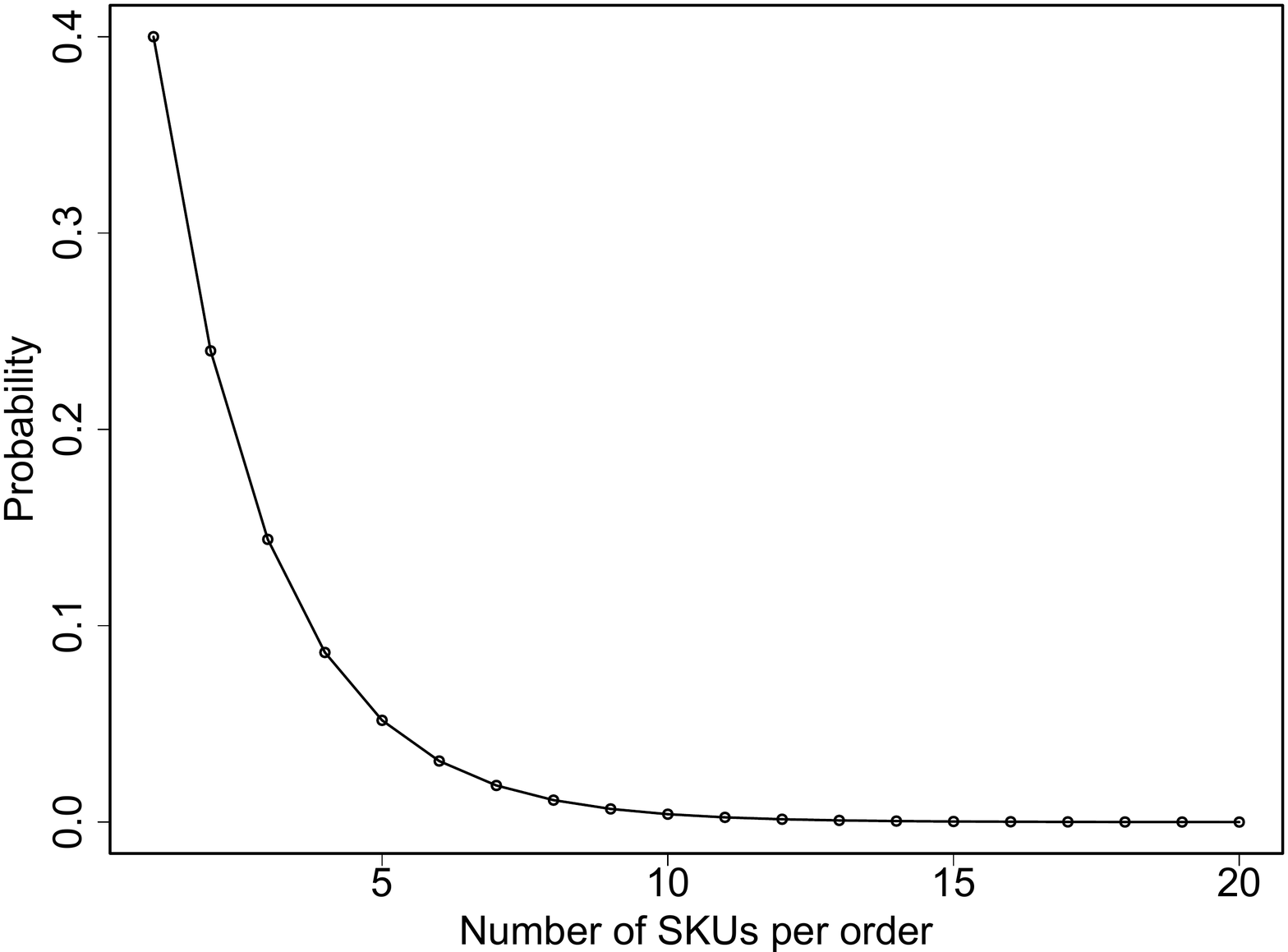}
		\captionof{figure}{Probabilities of order lengths for $|I|=20$.}
		\label{fig:orderlen}
	\end{minipage}%
	\begin{minipage}{.5\textwidth}
		\centering
		\captionsetup{justification=centering}
		\includegraphics[width=1\linewidth]{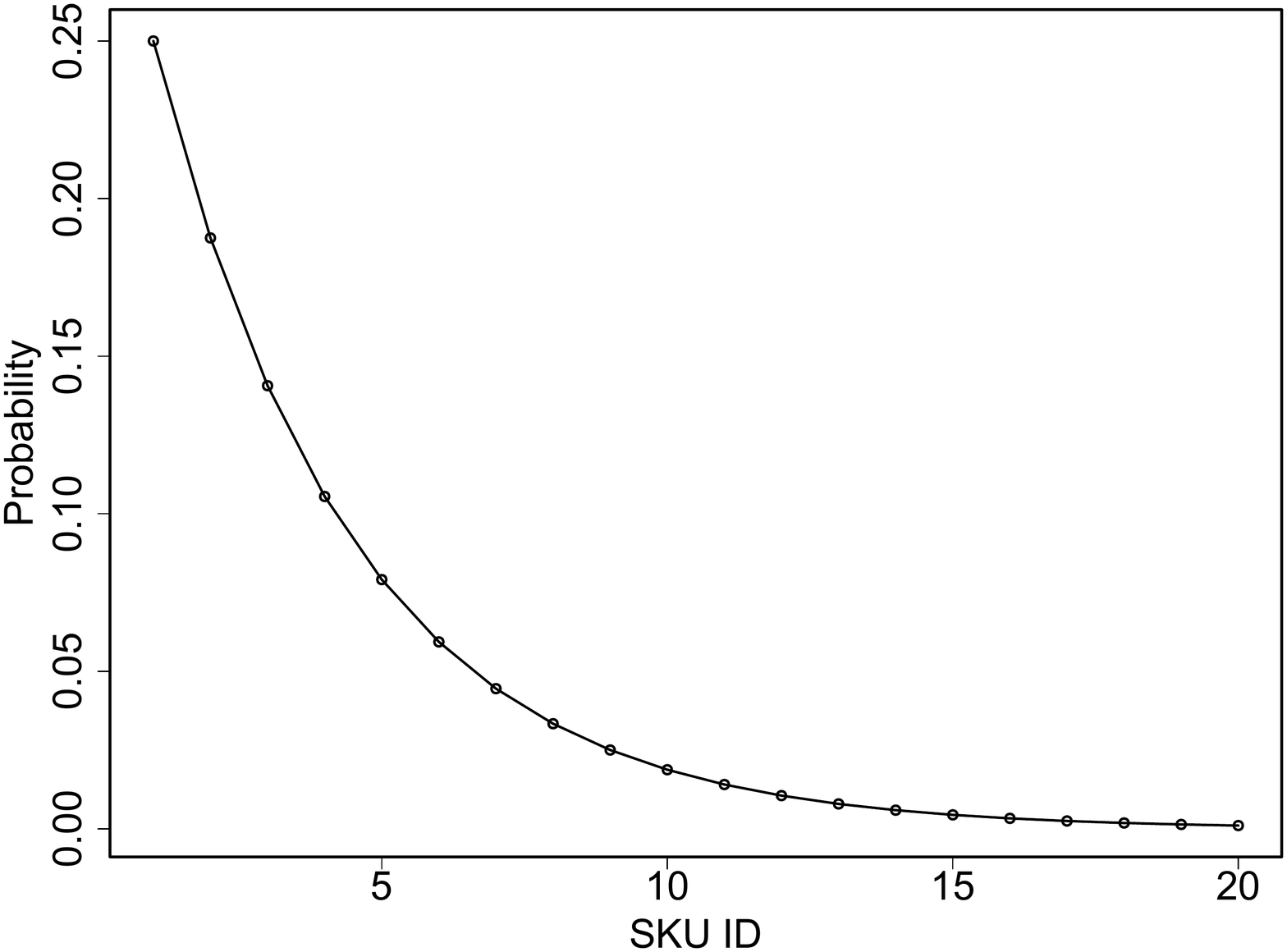}
		\captionof{figure}{Probabilities of SKUs for \\$|I|=20$.}
		\label{fig:skus}
	\end{minipage}
\end{figure}

A shared storage strategy (as described in \cite{bartholdi.2017}) is applied to fill the pods. To determine the initial inventory of the pods, lists of all SKUs in randomized order are concatenated until the combined list includes at least $|P|$ $\cdot$ $\alpha$ elements. Then, for each pod $p_1, ..., p_{|P|}$, the inventory is determined by cutting off the first $\alpha $ items in the list. 

We generate instances with different parameters from \Tabref{instance}. By considering all possible combinations of the parameters, we generate 24 instances. We test all methods in this paper with the pregenerated instances to see the efficiency of the different algorithms. In a real RMFS, the orders would not be known at the start of the simulation but would instead be received during runtime. Our methods are also compatible with this type of order-generation (see \Subsecref{practical}).

\begin{table}[H]
	\centering
	\begin{tabular}{ l l l }
		\hline
		Symbol & Description & Values\\
		\hline
		$|O|$ & Number of orders & 50, 150, 250 \\
		$|I|$ & Number of SKUs & 20, 100 \\
		$|P|$ & Number of pods & 50, 100 \\
		$\alpha$ & SKUs per pod & 2, 3 \\
		\hline
	\end{tabular}
	\captionof{table}{Instance parameters.}
	\label{tab:instance}
\end{table}


\subsubsection{Layout} \label{subsec:layout}
The layout is identical for all test instances and is illustrated in Figure~\ref{fig:Layout}: 428 pods (blue squares), 504 storage locations in $2\times4$ blocks, 4 picking stations (red squares) and 8 robots (gray circles). The length of station queues is 12. As we are focusing on the decisions made in the order picking process, we disregard the replenishment process. When the number of pods is higher than the value of $P$, only $P$ pods have an inventory and the empty pods are not taken into consideration. 
The capacity of picking stations is similar to that used in \cite{Boysen.2017}. \cite{Boysen.2017} used an order capacity of 6 orders. Since we use item capacity instead of order capacity, we multiply 6 by 2.5, the mean of the distribution used to determine the number of items per order (described in \Subsecref{intance}), which leads to an item capacity of 15. 
\vspace{0.5cm}
\begin{figure}[h]
	\centering
	\includegraphics[width=0.8\textwidth]{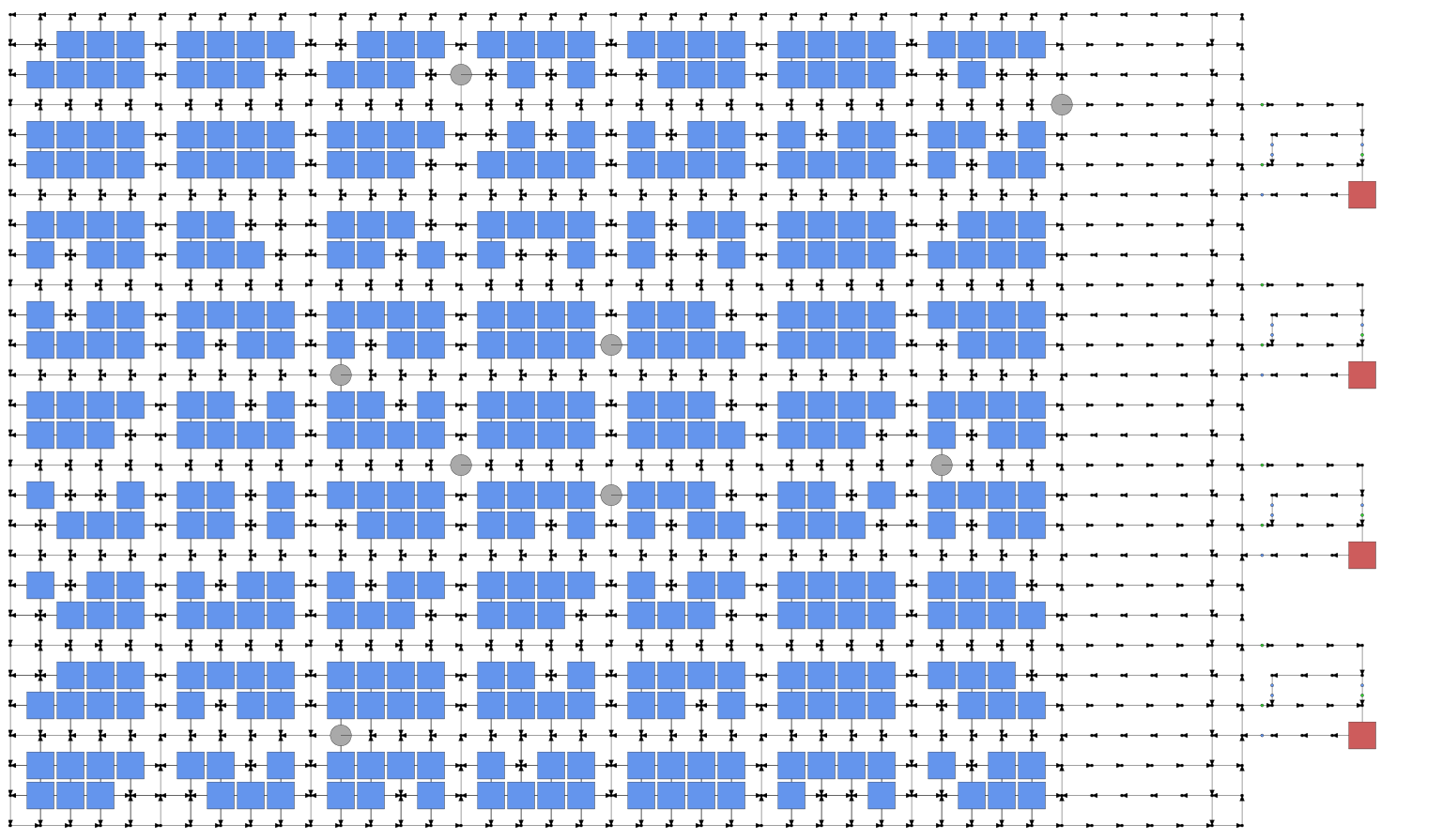}
	\caption{Simulation layout.}
	\label{fig:Layout}
\end{figure}
\subsubsection{Decision rules} \label{subsec:rules}
Table \ref{tab:decrules} lists the decision rules for all operational problems in the evaluation. 

\begin{table}[H]
	\centering
	\begin{tabular}{ l l l  }
		\hline
		Decision Problem & Sequential & Integrated (without/with split orders)\\
		\hline
		POA & Pod-Match & Model \\
		PPS &  Demand & Model \\
		ROA &  not relevant & not relevant \\
		RPS &  not relevant & not relevant \\
		PR &  Nearest & Nearest \\
		PP & WHCA$^*_n$  & WHCA$^*_n$ \\
		\hline
	\end{tabular}
	\captionof{table}{Decision rules}
	\label{tab:decrules}
\end{table}
The POA and PPS decision rules selected for the sequential approach are based on \cite{RawSimDecisionRules}, since these combinations achieved the best throughput. The rule \textit{Pod-Match} for the POA problem selects the pick order from the backlog that best
matches the pods already assigned to the station. Out of all pods that could fulfill at least part of an assigned order, the decision rule Demand for PPS chooses the pod whose inventory is in most demand when combining all unfulfilled orders.
The {\it ROA} and {\it RPS} problems are concerned with the replenishment process and thus not relevant to our tests.
The decision rule Nearest for {\it PR} assigns the closest parking space to each pod leaving a picking station.
For the {\it PP} problem, non-volatile WHCA* from \cite{pathplanning} is used. 
The model used for POA and PPS to test our integrated approach is the integrated model described in Section~\ref{sec:model} and both of its extensions for split orders.

\subsection{Simulation results} \label{subsec:results}
Each combination of method and instance is simulated ten times to lessen the effect of randomness caused by other optimizers and simulation components. Testing was done on 12 Intel Xeon X5650 Cores with 24 GB RAM. The following performance metrics are tested:
\begin{itemize}
	\item the number of pod-station visits per order
	\item the distance driven by robots per order
	\item pile-on: the number of picks per pod-station visit
	\item turn-over time: measured from the time an order is received to the time its last item is picked.
\end{itemize}

The parameter $W_u$ is set to 2, since we aim to fully utilize the picking station's capacity whenever possible. Any value larger than 1 would lead to the same outcome. Further parameters used in the simulation can be found in \ref{app:restpara}, such as parameters for robot movement.

Figure \ref{fig:podstationvisitsperorder} shows the average number of pod-station visits per order relative to the sequential approach for each instance set of 50, 150 and 250 orders. Compared to the sequential approach for the number of pod-station visits per order: The integrated model improves this performance by 20\% to 30\% for different instance sets, the split-among-stations model (in short in figures and tables: split) shows improvements of about 50\%, and the split-over-time model (in short in figures and tables: timesplit) improves on the sequential solution by 57\% to 80\% for different instance sets. 

Figure \ref{fig:distancesperorder} shows similar improvements for the average distances driven by robots to complete an order for each instance set. The correlation between pod-station visits and distances driven by robots confirms our assumption in Section~\ref{subsec:obj}, that distances driven by robots can be reduced by minimizing the number of pod-station visits. As the layout shown in \Figref{Layout} demonstrates, the distance between the inventory area and a picking station is in most cases greater than the distance between any two picking stations. Both a pod coming from the inventory area to a station and a pod coming from another picking station count as pod-station visits. In the sequential approach, pods may have to be transported directly between stations, because the orders that share the same pods are assigned to different stations. Therefore, there are pods moving between picking stations. In the integrated approach, we try to assign orders which share the same pods to the same stations; therefore, it reduces the number of pods moving between picking stations. This explains why the distances per robot in the integrated approach do not show as much of an improvement as pod-station visits per order compared to the sequential approach.

\vspace{-0.4cm}
\begin{figure}[H]
	\centering
	\begin{minipage}{.4\textwidth}
		\centering
		\captionsetup{justification=centering}
		\includegraphics[width = \textwidth]{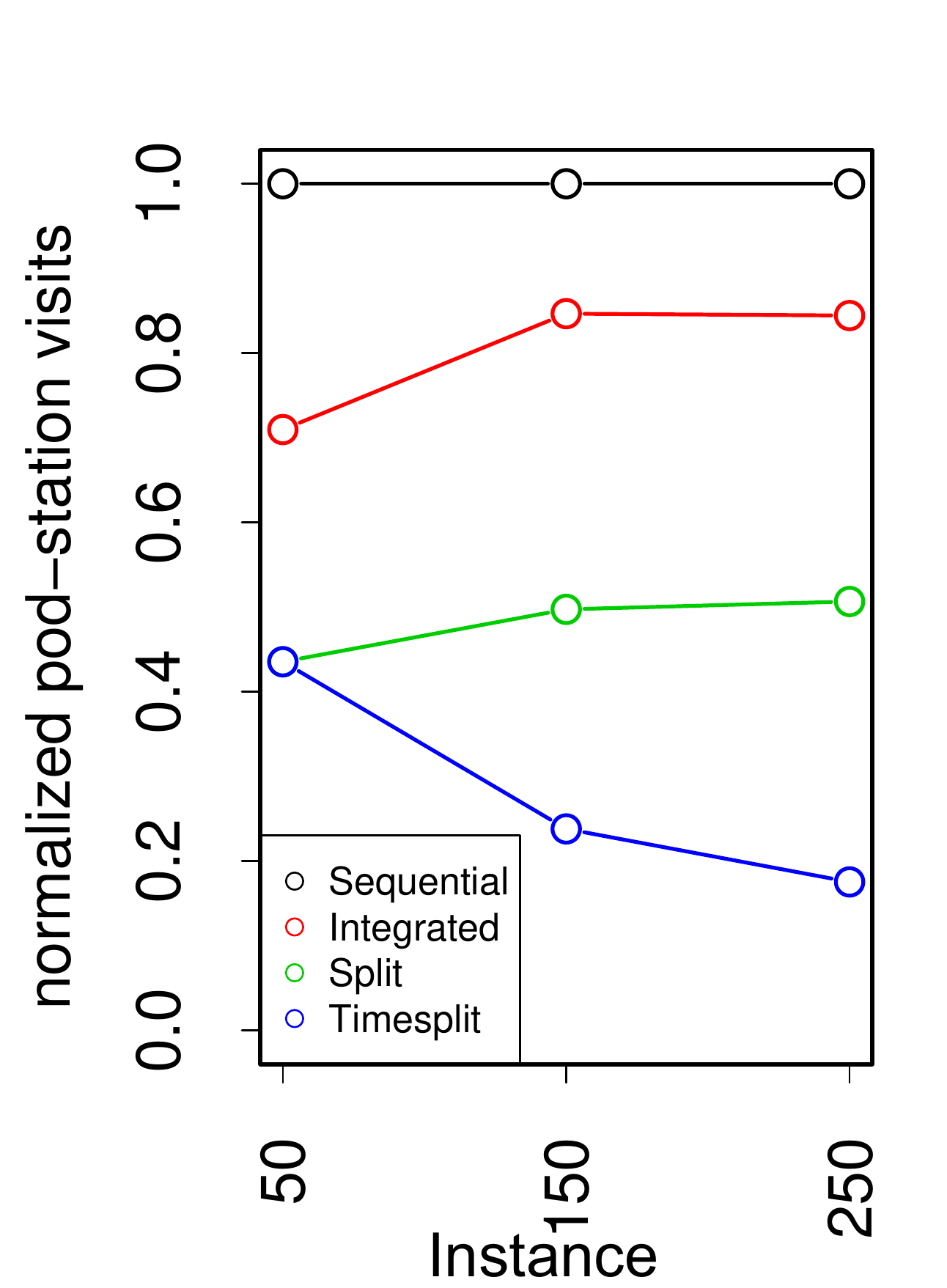}
		\captionof{figure}{Normalized pod-station visits per order.}
		\label{fig:podstationvisitsperorder}
	\end{minipage}%
	\begin{minipage}{.4\textwidth}
		\centering
		\captionsetup{justification=centering}
		\includegraphics[width = \textwidth]{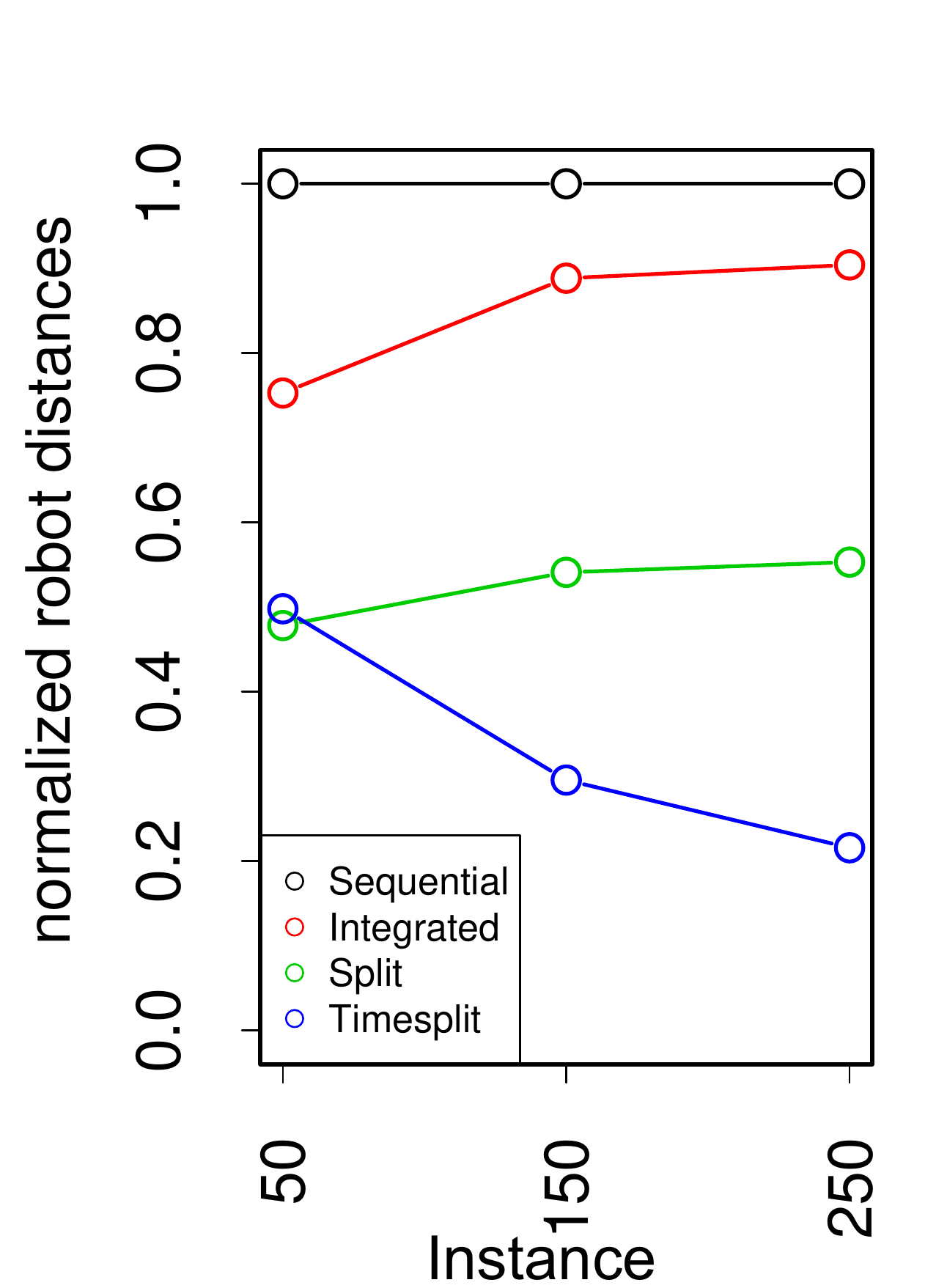}
		\captionof{figure}{Normalized robot distances per order.}
		\label{fig:distancesperorder}
	\end{minipage}%
\end{figure}
\vspace{-0.4cm}
Figure~\ref{fig:pileon} shows the average pile-on of all methods for each instance set of 50, 150 and 250 orders. Our approach with split orders causes more picks per pod-station visit (PSV), especially in the split-over-time model (up to 5.5 times as many picks per PSV). 

\vspace{-0.4cm}
\begin{figure}[H]
	\begin{center}
		\includegraphics[width =0.4\textwidth]{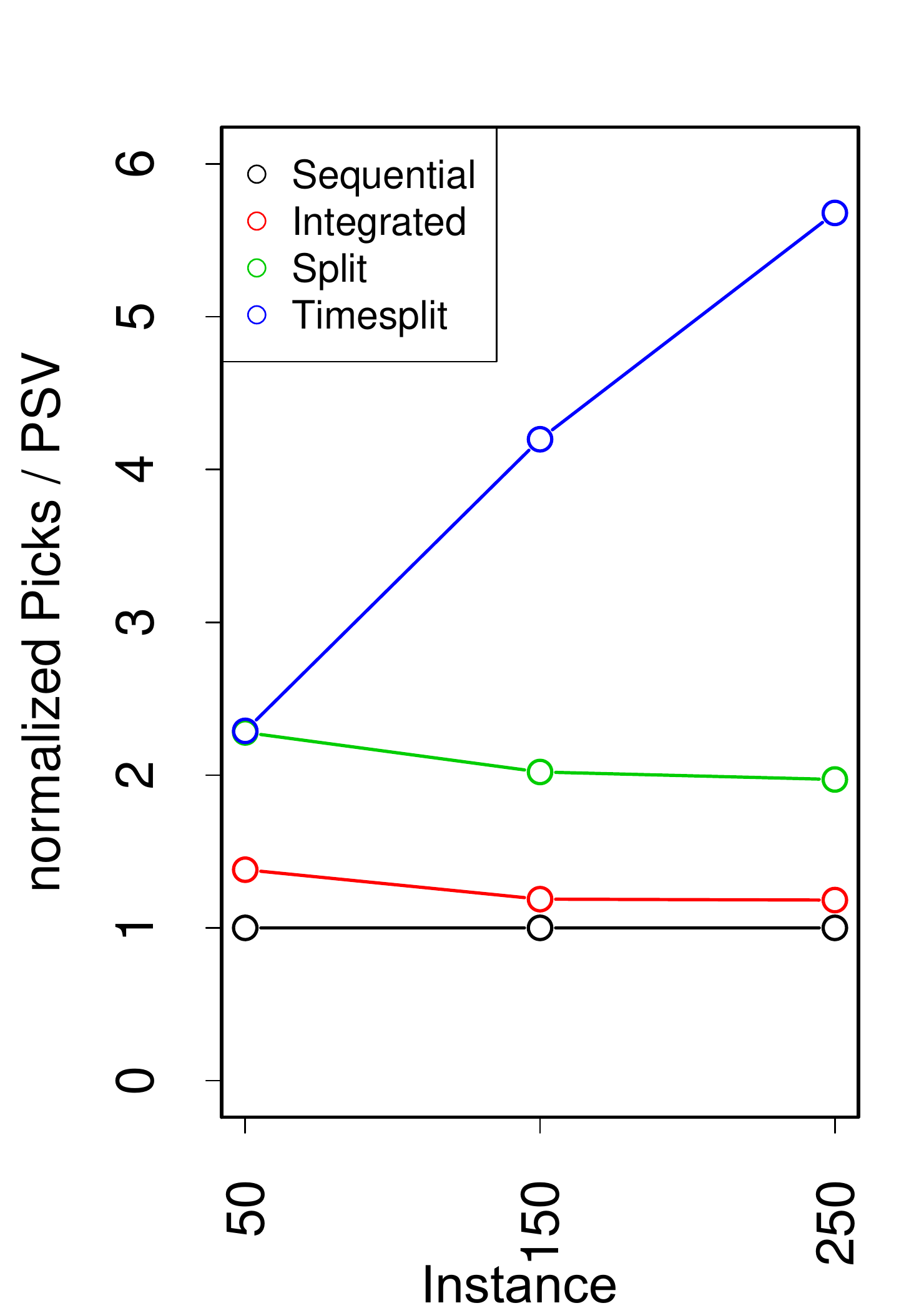}
		\caption{Normalized pile-on.}
		\label{fig:pileon}
	\end{center}
\end{figure}
\vspace{-0.4cm}

In Figure~\ref{fig:turnovertime} we separate the turn-over time into two parts, namely the time in the backlog (the lower part) and the time in the station for picking (the upper part). This metric cannot be directly applied to the split-over-time model, since parts of an order are assigned to stations at different time periods and some parts may spend more time in the backlog than others. Therefore, for the split-over-time results, the lower bar depicts the time until all parts of an order are assigned to a station while the upper bar shows how long it takes after that for the order to be fully picked. In the integrated and split-among-stations models, the orders spend less time in both the stations and the backlog, compared to the sequential approach. However, the turn-over time for orders in the split-over-time model is higher, since the parts of an order that can be picked with a low objective value are fulfilled first while the parts that need more pods to fulfill are pushed back. Therefore, it takes longer on average until an order is fully picked despite the picks/time being equal.

\begin{figure}[h]
	\centering
	\includegraphics[width=0.7\textwidth]{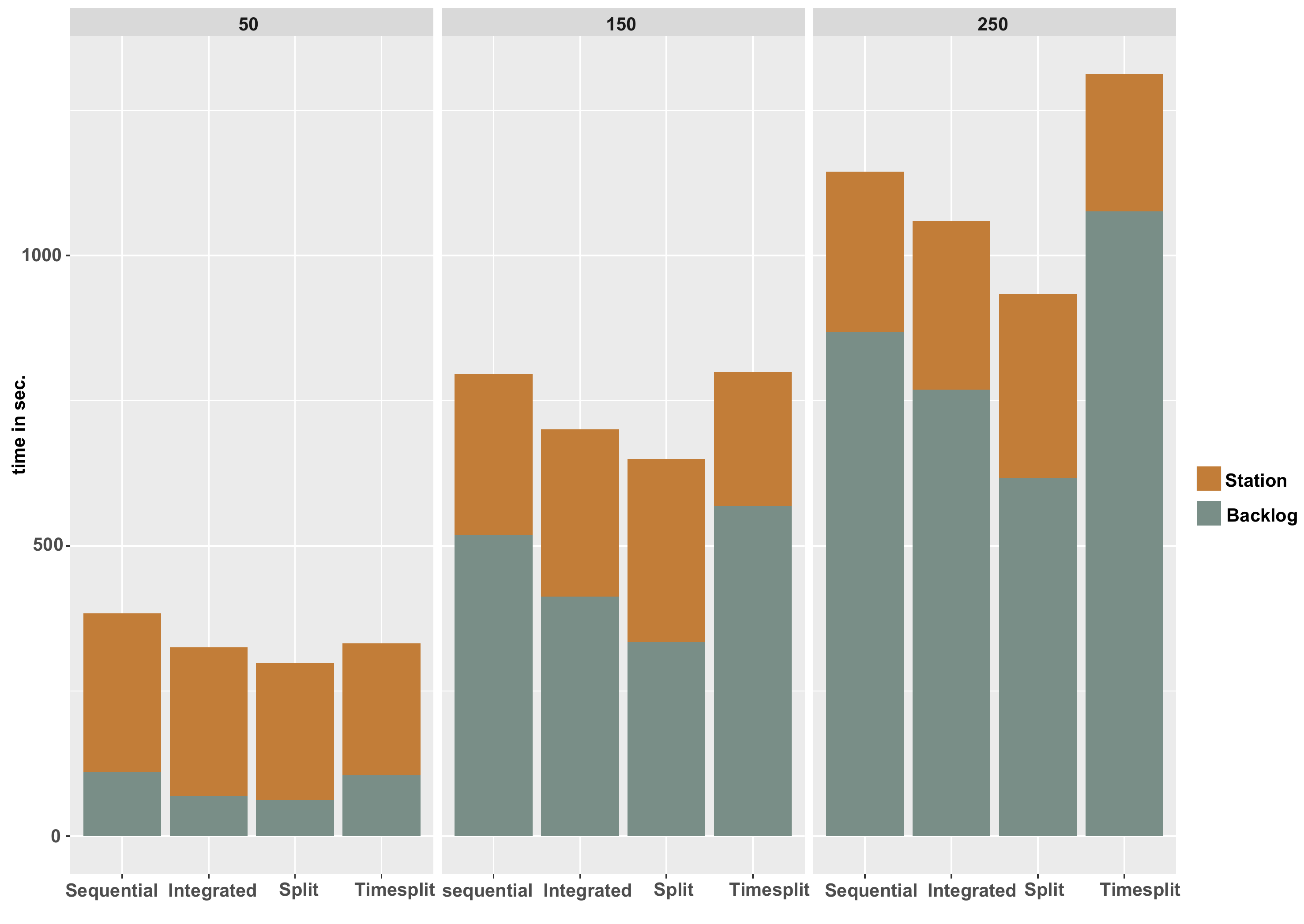}
	\caption{Turn-over time.}
	\label{fig:turnovertime}
\end{figure}

\Figref{comptime_total} illustrates the total computing times for each instance set of 50, 150 and 250 orders. The total computing time for the integrated approaches with and without split orders are much larger than those for the sequential approach. We divide the total computing time into the time at $t=1$ and the time of the remaining periods $t>1$. The period at $t=1$ takes a lot of time, since no orders or pods have previously been assigned to any stations and therefore more decisions are necessary than in the following periods. As this usually happens at most once a day (when the system is stopped and then restarted), we can consider it a warm-up. It is more interesting to see the periods $t>1$, since most decisions are made in those periods. Note that the decisions at $t=1$ in the split-over-time model are faster than those in the integrated and split-among-stations models. The reason is that the split-over-time model can easily find the best possible solution in the first timestep, where only one pod is needed at each station. Table~\ref{tab:reltosimtime} shows the total computing time of the decisions in $t>1$ in relation to the simulation time. We deem computing times acceptable as long as they are lower than the total simulation time. In our tests, computing the decisions took at most around 15\% of the simulation time (split-over-time, 250-order instance set). The average time for the assignment of one order is at most 1.5 seconds (see Table~\ref{tab:reltotimeperorder}). 
\begin{figure}[H]
	\begin{center}
		\includegraphics[width = 0.75\textwidth]{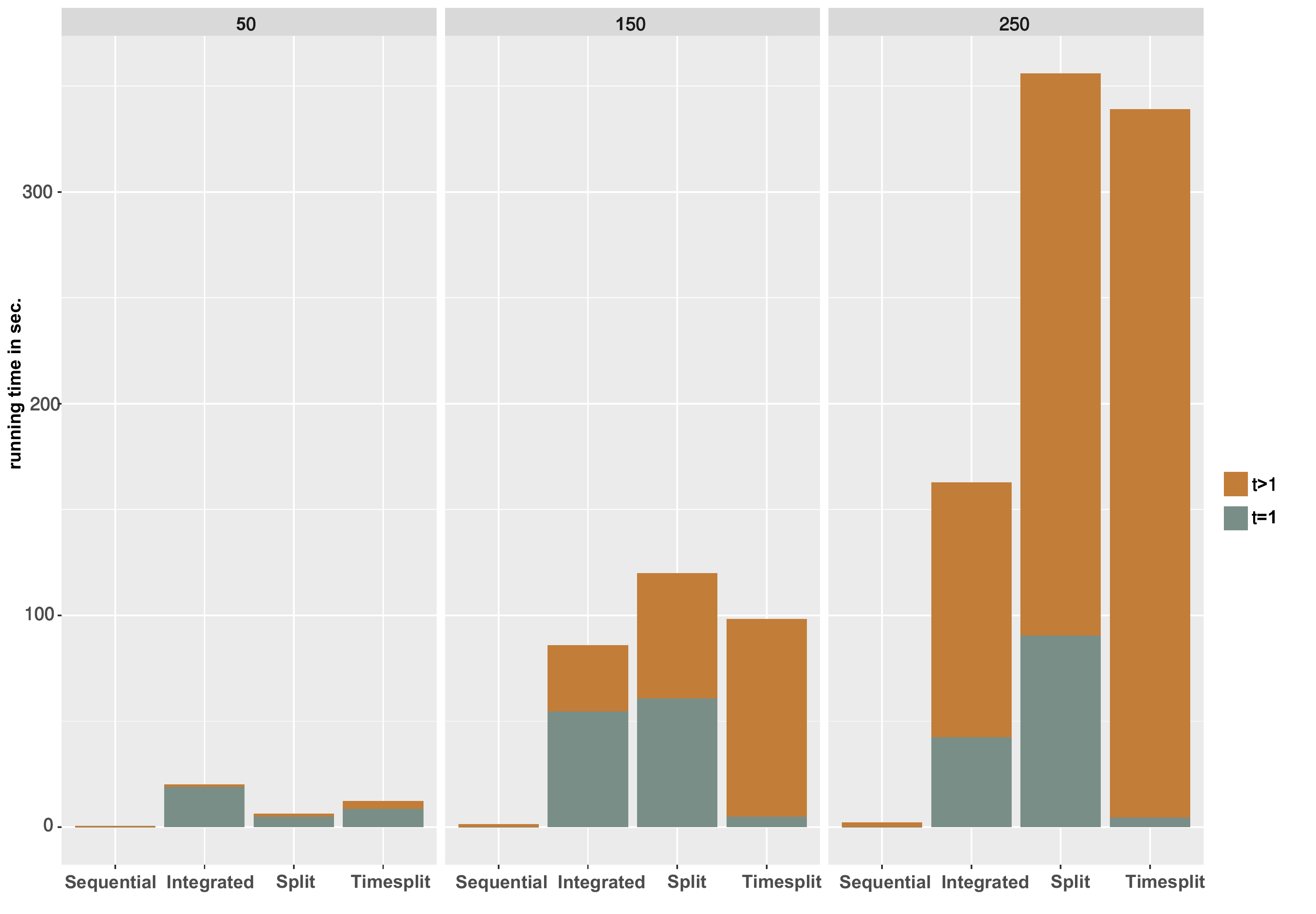}
		\caption{Computing time for the first period and others.}
		\label{fig:comptime_total}
	\end{center}
\end{figure}


\begin{table}[H]
	\centering
	\begin{tabular}{ l l l l l}
		\hline
		Instance set & Sequential & Integrated & Split & Timesplit\\
		\hline
		50 & 0.07\% & 0.17\% & 0.24\% & 0.61\% \\
		150 &  0.07\% & 1.4\% & 3.46\% & 6.41\% \\
		250 &  0.07\% & 3.39\% & 9.67\% & 14.67\% \\
		\hline
	\end{tabular}
	\captionof{table}{Computing time per simulation time for $t>1$. }
	\label{tab:reltosimtime}
\end{table}

\begin{table}[H]
	\centering
	\begin{tabular}{ l l l l l}
		\hline
		Instance set & Sequential & Integrated & Split & Timesplit\\
		\hline
		50 & 0.01 & 0.03 & 0.03 & 0.08 \\
		150 &  0.01 & 0.21 & 0.4 & 0.62 \\
		250 &  0.01 & 0.48 & 1.06 & 1.34 \\
		\hline
	\end{tabular}
	\captionof{table}{Computing time $t>1$ per order (in seconds).}
	\label{tab:reltotimeperorder}
\end{table}
\subsubsection{Acceleration method for larger instances} 
In order to reduce the computing time of our integrated models, especially for larger instances, we developed a prefiltering method which selects the $n \leq |\mathcal{O}|$ orders from the backlog that have the highest percentage of order lines for which a pod containing the SKU currently is at (or on its way to) a station. We test this method for different $n$ in all instance sets and model variants to analyze the effect $n$ has on computing time and the quality of the solution (number of pick-station visits). The results of different $n$ between 10 and 250 for 250-order instances in the split-among-stations model are illustrated in Figure~\ref{fig:prefiltering}. The results of other instance sets show a similar distribution for the integrated approaches. By using the prefiltering method, we can save up to 90\% in computing time while pod-station visits rise by 17\% compared to the results of the split-among-stations model without using the prefiltering method (see order count 10 in Figure~\ref{fig:prefiltering}). Even for $n$ = 10, the solution is still 40\% better than the sequential solution. Table~\ref{tab:rel} displays the computing time for t $>$ 1 relative to the simulation time.
\begin{figure}[h]
	\begin{center}
		\includegraphics[width=0.75\textwidth]{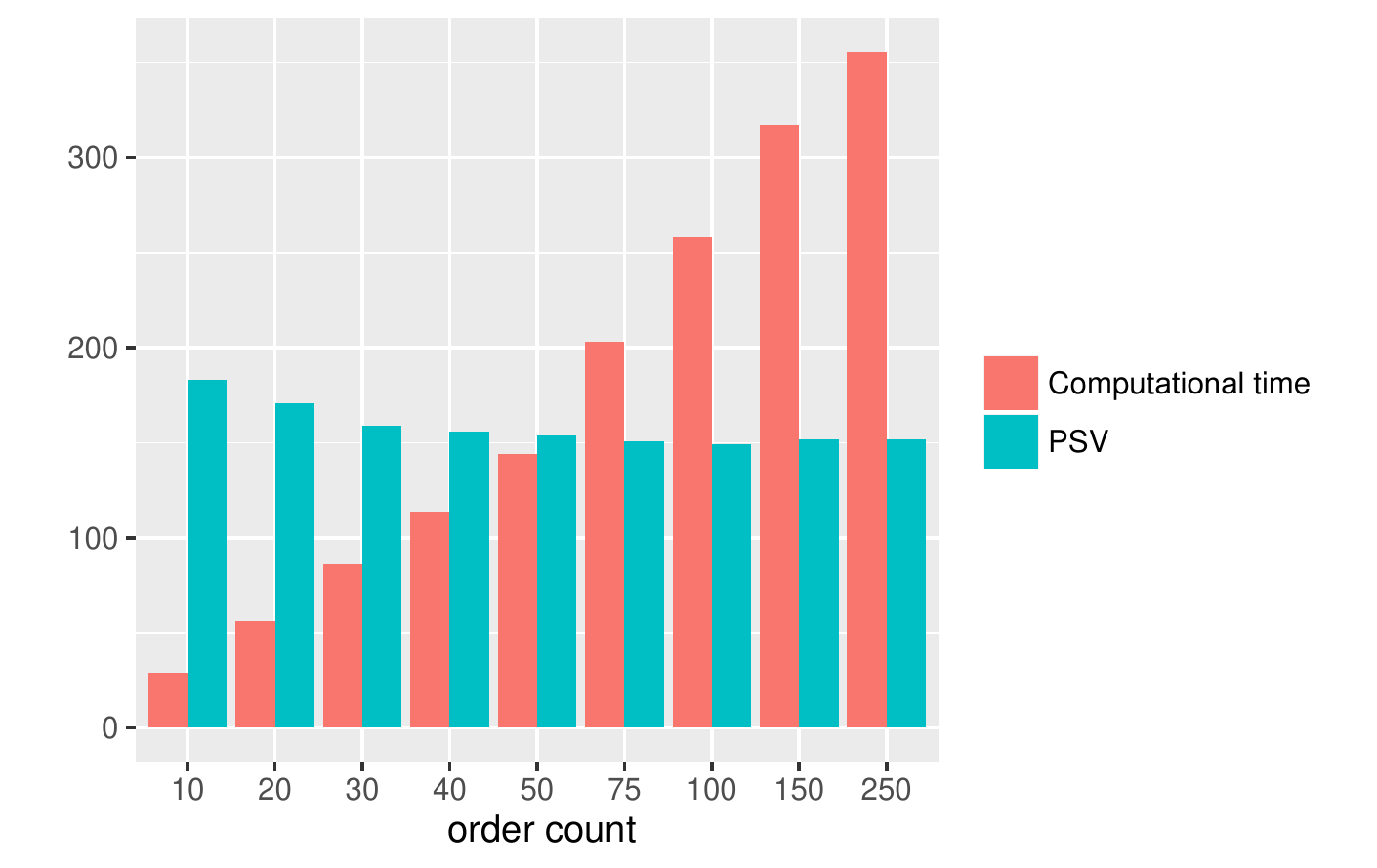}
		\caption{Computational time vs. the number of pick-station visits for the 250-order instance set by prefiltering $n$ orders ($n$ is from 10 to 250).}
		\label{fig:prefiltering}
	\end{center}	
\end{figure}
\begin{table}[H]
	\centering
	\begin{tabular}{l l l l l l l l l l}
		\hline
		$n$ &10 & 20 & 30 & 40 & 50 & 75 & 100 & 150 & 250 \\
		\hline
		&1.03\% & 2.0\% & 3.07\% & 3.92\% & 4.93\% & 6.97\% & 8.41\% & 9.64\% &9.67\%\\
		\hline
	\end{tabular}
	\captionof{table}{Relative running times compared to the simulation times for $t>1$. }
	\label{tab:rel}
\end{table}
Using the previously described prefiltering method, we test our integrated approaches for a larger instance set of 2000 orders. SKUs and orders were generated as described in Subsection~\ref{subsec:intance} and the layout shown in Subsection~\ref{subsec:layout} is used. We chose $n=10$ to reduce the running time. Figure~\ref{fig:psv_distance_2000_fil10_8bots} shows the relative distances driven by robots and combined pod-station visits compared to the sequential solution. Note that the prefiltering method is not applied to the sequential approach. This leads to the integrated solution being slightly worse than the sequential solution. The split-among-stations and split-over-time methods reduce both metrics by around 35\% and 80\% respectively, despite $n$ being as low as 10. Table~\ref{tab:reltosimtime_2000} shows the different methods, running times for $t>1$ relative to the simulation time.
The highest value is 18.85\% of the simulation time, which is still far below the threshold of 100\% and therefore deemed acceptable.
\begin{figure}[H]
	\centering
	\includegraphics[width=0.55\textwidth]{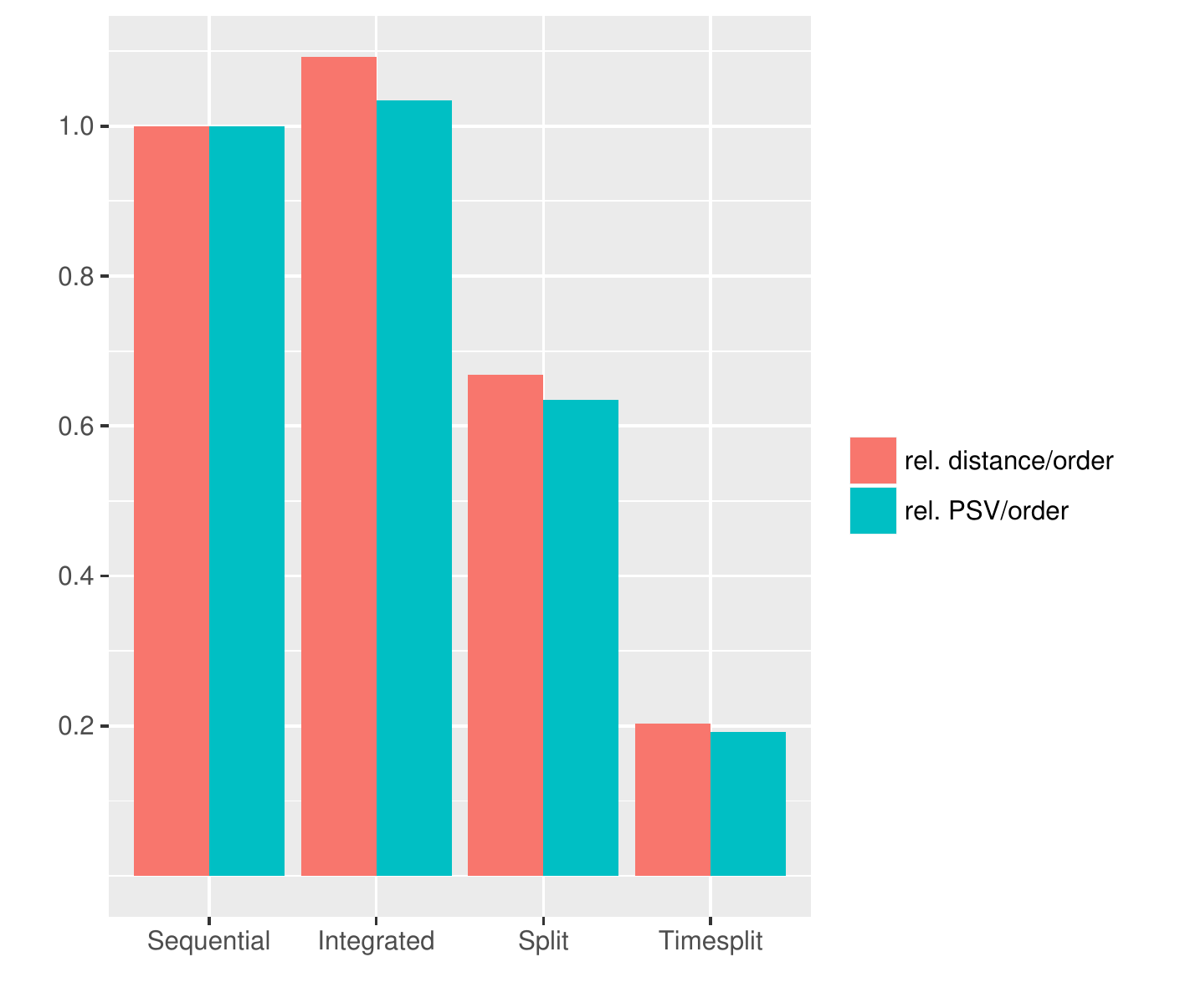}
	\caption{Pod-station visits per order and robot distance per order for the large instance set of 2000 orders.}
	\label{fig:psv_distance_2000_fil10_8bots}
\end{figure}
\begin{table}[H]
	\centering
	\begin{tabular}{ l l l l l}
		\hline
		Instance set & Sequential & Integrated & Split & Timesplit\\
		\hline
		2000 & 0.21\% & 2.14\% & 4.37\% & 18.85\% \\
		\hline
	\end{tabular}
	\captionof{table}{Relative running times compared to the simulation times for $t>1$.}
	\label{tab:reltosimtime_2000}
\end{table}

\subsection{Practical remarks} \label{subsec:practical}
Some assumptions in \Subsecref{assumption} might differ from real-world scenarios. In this section we discuss them from a practical point of view.
\begin{itemize}
	\item Real-world orders: in this paper we use pregenerated instances to test the performance of different approaches. Instead, in a real RMFS, new orders would constantly come in while the optimization algorithms are running. Even completely new SKUs could be stocked during the optimization. To account for this, our simulation RAWSim-O was recently extended by an interface to an ERP system to allow for its use as a robot control system in real warehouses, as described in \cite{Xie.2018}. This new feature of the simulator could also be used in conjunction with the content presented in this paper, to implement the model presented here in a real warehouse and use real instances instead of pregenerated testing instances.
	\item Capacity of packing stations: for the results shown above, we assumed the capacity of packing stations for split orders to be large enough to store all possible split orders. As this capacity might differ from one company to another, a situation where not enough capacity is available to allow for the splitting of all orders is conceivable. In order to consider this situation, we can extend, for example, the model in \Subsecref{splitmodel} with the following parameters, variables and constraints. We need two additional parameters: $C$ for the total capacity of packing stations (in other words: number of available split orders) and $N$ for the number of stations. The binary variable $y_o^l$ is activated if order $o \in O$ is split, while $n^l$ counts the number of currently active split orders from the previous periods. Note that, this number is decreased by one if one split order is picked completely.
	\begin{align}
	& \sum_{s \in \mathcal{S}} y_{os} -e_o = {y}_o, \; \forall o \in \mathcal{O} \tag{\ref{eq:oap-4}.2}\label{eq:oap-44}\\
	& y_o^l \geq e_o/N, \; \forall o \in \mathcal{O}\label{eq:oap-40}\\
	& y_o^l \leq e_o, \; \forall o \in \mathcal{O}\label{eq:oap-41}\\
	& n^l+y_o^l \leq C \; \label{eq:oap-42}\\
	& y_o^l \in \lbrace 0, 1 \rbrace, \; \forall o \in \mathcal{O} \label{eq:oap-43}\\
	& e_o \in \mathbb{Z} \geq 0, \forall o \in \mathcal{O} \label{eq:oap-215}
	\end{align}
	
	In constraint set \eqref{eq:oap-44}, $e_o$ is counted as the number of additional stations to finish picking order $o$. Constraint sets \eqref{eq:oap-40} and \eqref{eq:oap-41} make sure that $y_o^l$ is equal to one if $e_o \ge 0$, while constraint set \eqref{eq:oap-42} makes sure that the number of split orders is less than $C$. Constraint set \eqref{eq:oap-43} defines $y_o^l$ as binary variables for each order $o \in \mathcal{O}$, while constraint set \eqref{eq:oap-215} ensures that $e_o$ can only have non-negative integer values. 
	
	To see the impact that a limited capacity of packing stations has on the solution, we tested the 250-order instance set for a packing station capacity of 10. When applying this limited capacity to the split-among-stations model, pod-station visits were reduced by 35\% compared to the sequential approach, instead of nearly 50\% when packing capacity is not limited. Note that the capacity limit of 10 is an extreme case and the solution is still better than that of the sequential approach. In the real world, higher capacities are possible.
	\item Item capacity: this capacity is used for comparing the sequential and integrated approaches. The capacity of a picking station in the real world is limited by the number of totes, which depends on the size of a tote. There are the following possibilities to apply our approaches without changing the layout of picking stations: 
	\begin{itemize}
		\item using different sizes of totes, with smaller ones for split orders 
		\item using separators in totes for split orders (an example: LocusBot in Figure~\ref{fig:locusBots}).
	\end{itemize}
\begin{figure}[H]
	\centering
	\includegraphics[width=0.25\textwidth]{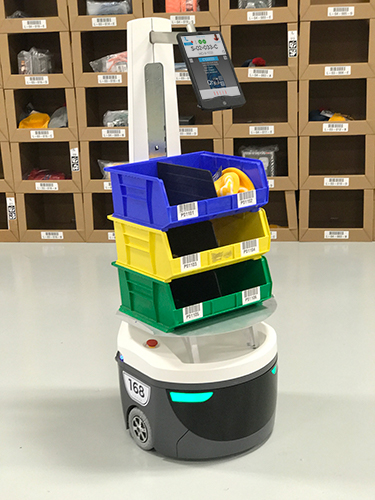}
	\captionof{figure}{An example of using separators in totes (LocusBot).}
	\label{fig:locusBots}
\end{figure}
	\item Reliability of the simulation: as described in \cite{Xie.2018}, the simulation framework RAWSim-O was extended to connect to an ERP system and industry robots. So the optimizers in the simulation, including the newly developed ones in this paper, can be applied directly in real-world scenarios.
\end{itemize}

\section{Conclusions} \label{sec:conclusions}
In an RMFS, the decision on the assignment of orders to stations (POA) affects the throughput of the whole system the most (see \cite{RawSimDecisionRules}). Moreover, the decision on the assignment of pods to stations (PPS) should be made together with POA to get better results (see Example 2 in Section~\ref{subsec:example}). Therefore, we developed novel methods to solve both POA and PPS for multiple stations and make periodic decisions that minimize the number of visits by pods to stations (in short: pod-station visits) to fulfill all customer orders. First, we introduced a new mathematical model to integrate POA and PPS (in short: integrated). Second, we extended the integrated model to allow for split orders. An order is split when not all of its parts are assigned together to a station. Two variations of split orders are considered: split-among-stations (all order lines of a pick order have to be assigned in the same time period, but may be assigned to different picking stations) and split-over-time (order lines of a pick order may be assigned in different time periods and to different picking stations). In the instances we tested, the number of pod-station visits could be reduced by up to 30\% using the integrated model, 50\% using the split-among-stations model and up to 80\% using the split-over-time model compared to the state-of-the-art sequential approach described in \cite{RawSimDecisionRules}.

Additionally, we analyzed the simulation results with regard to three additional performance metrics, namely robot distances, pile-on and order turn-over time. According to our experiments, a reduction in pod-station visits induces a reduction in robot distances and by definition it comes with higher pile-on. The turn-over times for orders could be reduced (except for the split-over-time model), due to the shorter waiting time in the backlog, caused by more efficient order assignments which require the robots to drive less distance and therefore better supply the stations with pods to pick from. 

The running times for our test instances with 50, 150 and 250 orders were acceptable; however, we needed an acceleration method to test large instances with 2000 orders. For instances of that size, it is not practicable to consider all unfulfilled orders in the models at each period; moreover, not all orders are suitable in each period. Therefore, we selected 10 orders according to our prefiltering method in each period to submit to the model. Still, a 30\% or 80\% reduction in pod-station visits could be achieved using the split-among-stations or the split-over-time model respectively, with acceptable running times.

We additionally discussed some assumptions for the mathematical models from a practical point of view, such as the extension of the split-among-stations model to support limited capacity of packing stations.

Since an RMFS is a new type of warehousing system, the concepts
specific to RMFSs have not received much scholarly attention. For example, the implementation of drift spaces (see \cite{Wulfraat.2012}) or priority zones (see \cite{flipse2011altering}). These spaces or zones are located in the proximity of workstations to store pods which might be used in the near future. Determining the optimal size of drift spaces would also be interesting. 

\section*{Acknowledgements}
We would like to thank Dr. Sonja Otten for proofreading the text and the formulas
and her valuable suggestions. We would like to thank the Paderborn Center for Parallel
Computing for providing their clusters for our numerical experiments. This paper is a contribution within the project "Robotic Mobile Fulfillment Systems" that is funded by Beijing Hanning Tech Co., Ltd in China.

\bibliography{orderpicking}

\appendix
\section{Example of the split-over-time approach}
\label{app:example_timesplit}
We have in this example one open position for station 1, while we have two identical orders 1 and 2 in the backlog (in Figure~\ref{fig:t_0_timesplit}). These two orders contain SKUs shown in blue and orange. These two SKUs are located in two different pods, namely pod 1 with the orange SKU and pod 2 with the blue SKU. By allowing orders 1 and 2 to be split into blue and orange parts, the orange ones can firstly be picked from pod 1 at station 1 in period 1, while one position is open at station 2 (see Figure~\ref{fig:t_1_timesplit}). After that, the blue ones are picked from pod 2 at station 2 in period 2 (see Figure~\ref{fig:t_2_timesplit}). This allows both orders to be fulfilled with two pod-station visits.
\begin{figure}[h]
	\begin{subfigure}{0.2\textwidth}
		\hspace{-3cm}
		\includegraphics[width = \textwidth, left]{orders_split.pdf}
	\end{subfigure}
	\newline
	\centering
	\begin{subfigure}[b]{0.3\textwidth}
		\includegraphics[width = \textwidth]{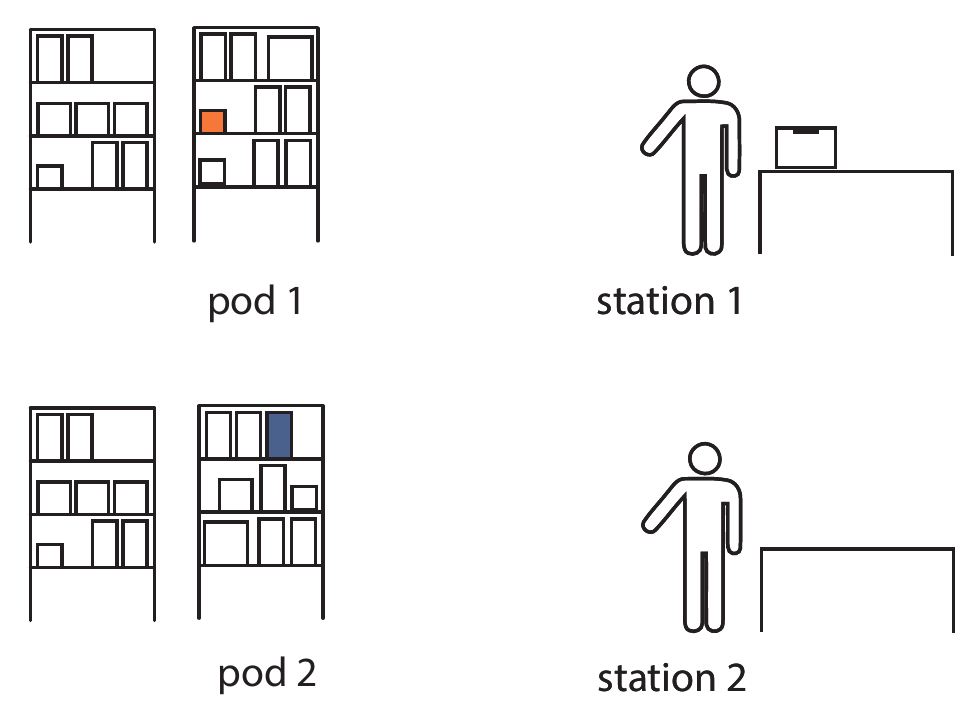}
		\caption{Initial state}	
		\label{fig:t_0_timesplit}
	\end{subfigure}
	\rulesep
	\begin{subfigure}[b]{0.3\textwidth}
		\includegraphics[width = \textwidth]{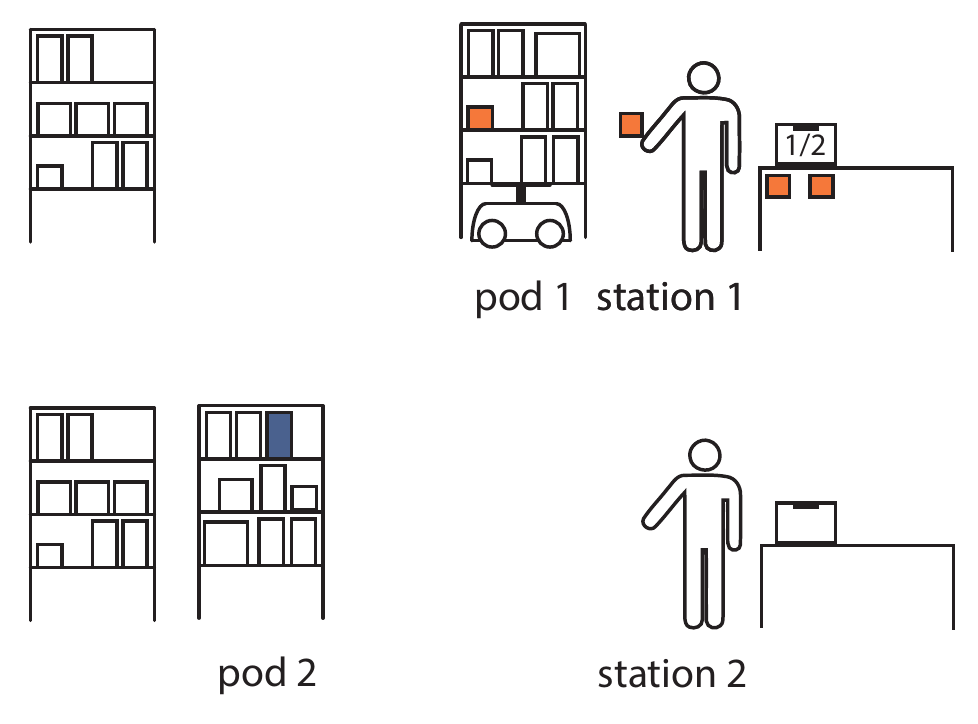}
		\caption{Decision made in period 1}	
		\label{fig:t_1_timesplit}
	\end{subfigure}
	\rulesep
	\begin{subfigure}[b]{0.3\textwidth}
		\includegraphics[width = \textwidth]{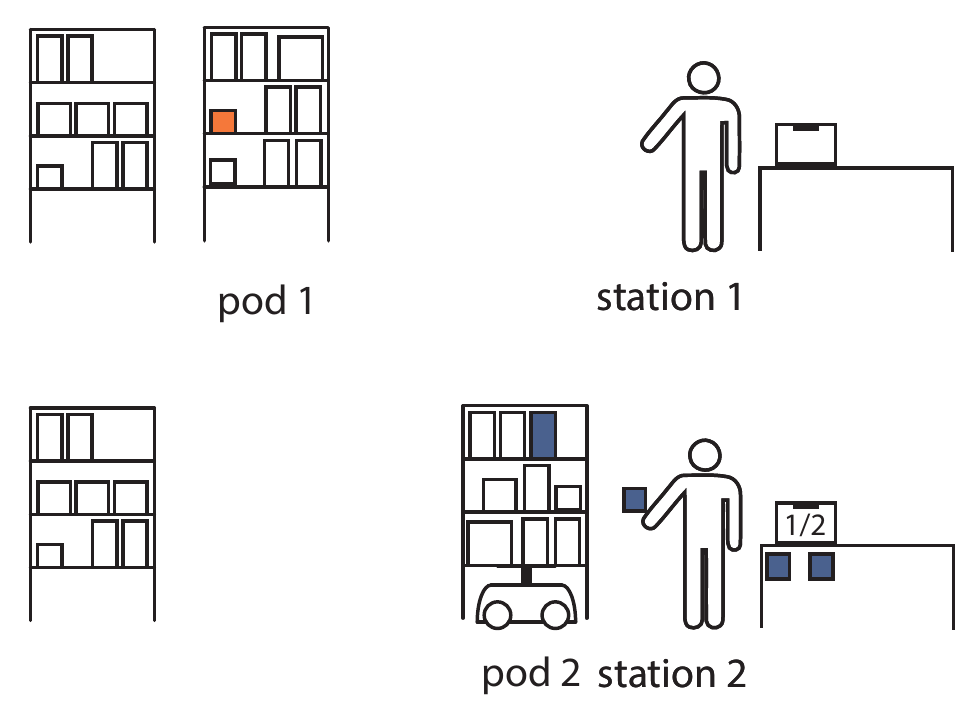}
		\caption{Decision made in period 2}	
		\label{fig:t_2_timesplit}
	\end{subfigure}
	\caption{An example of the split-over-time approach.}
	\label{fig:example_timesplit}
\end{figure}
\section{Calculation example for the capacity of a packing station} \label{app:cal_cap_packing}
As mentioned before, the capacity of a packing station depends on the number of parking shelves and their sizes. According to \cite{Wulfraat.2012}, the typical size of a shelf is 99cm $\times$ 99cm $\times$ 244cm. And we have two common sizes of boxes, namely 25cm $\times$ 17.5cm $\times$ 10cm and 37.5cm $\times$ 30cm $\times$ 13.5cm (see DHL Packsets in sizes S and M in \url{https://www.dhl.de/en/privatkunden/pakete-versenden/verpacken.html}). As shown in Figures \ref{fig:packsets_s} and \ref{fig:packsets_m}, we can store 12 and 6 boxes within a tier of a shelf for the boxes in sizes S and M respectively. Then we assume we store boxes in size S in half of the shelves, and boxes in size M in the other half. By considering spaces for open boxes and usable vertical spaces of a shelf, we assume 100cm of vertical space is available for five tiers of boxes with sizes S, while another 100cm is available for three tiers of boxes with size M. Based on these assumptions, we get to store 78 boxes in total on a shelf. In other words, we can store 78 split orders on a shelf.

\begin{figure}[h]
	\centering
	\begin{subfigure}[b]{0.3\textwidth}
		\includegraphics[width = \textwidth]{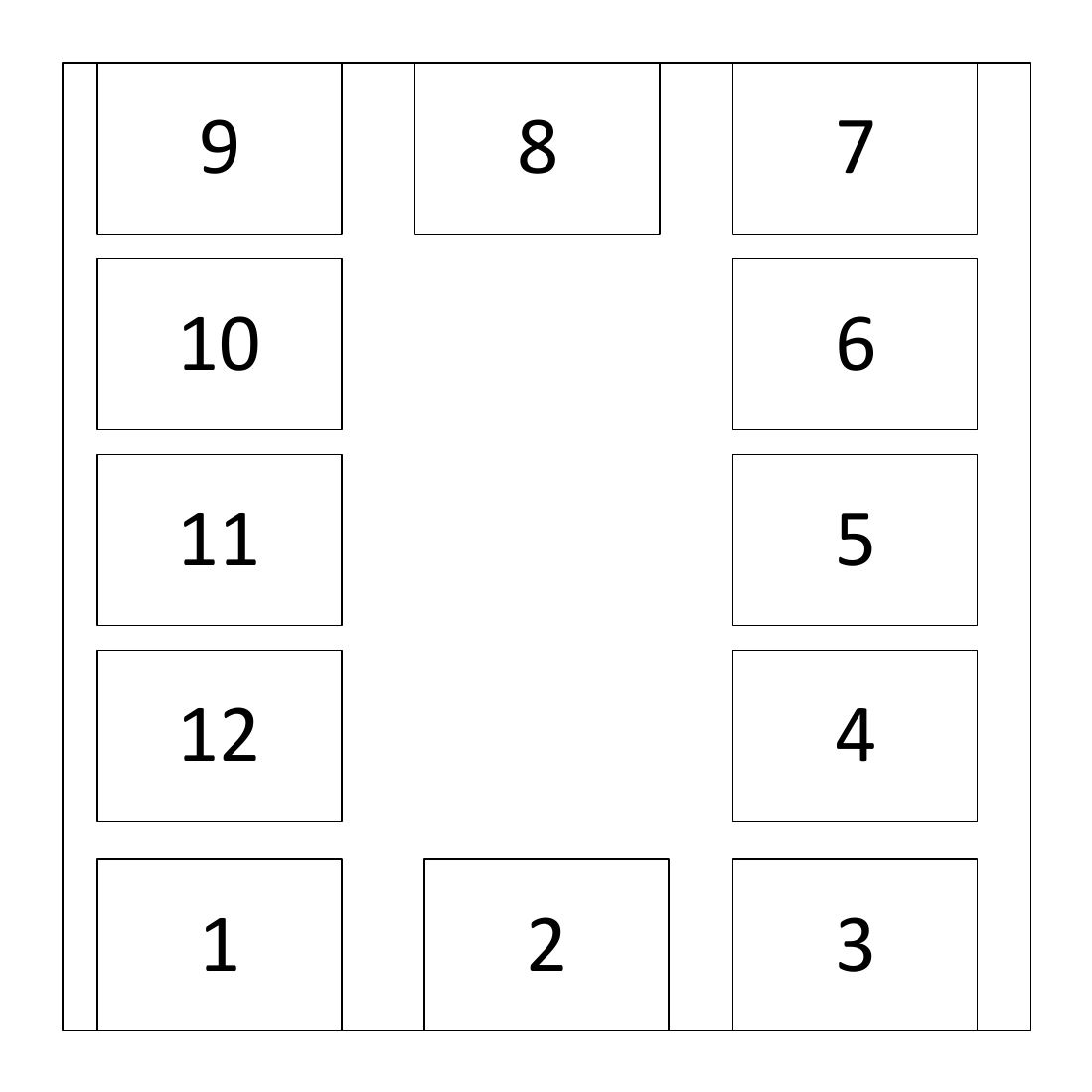}
		\caption{Positions of the boxes in sizes S within a tier of a shelf.}	
		\label{fig:packsets_s}
	\end{subfigure}
	\begin{subfigure}[b]{0.3\textwidth}
		\includegraphics[width = \textwidth]{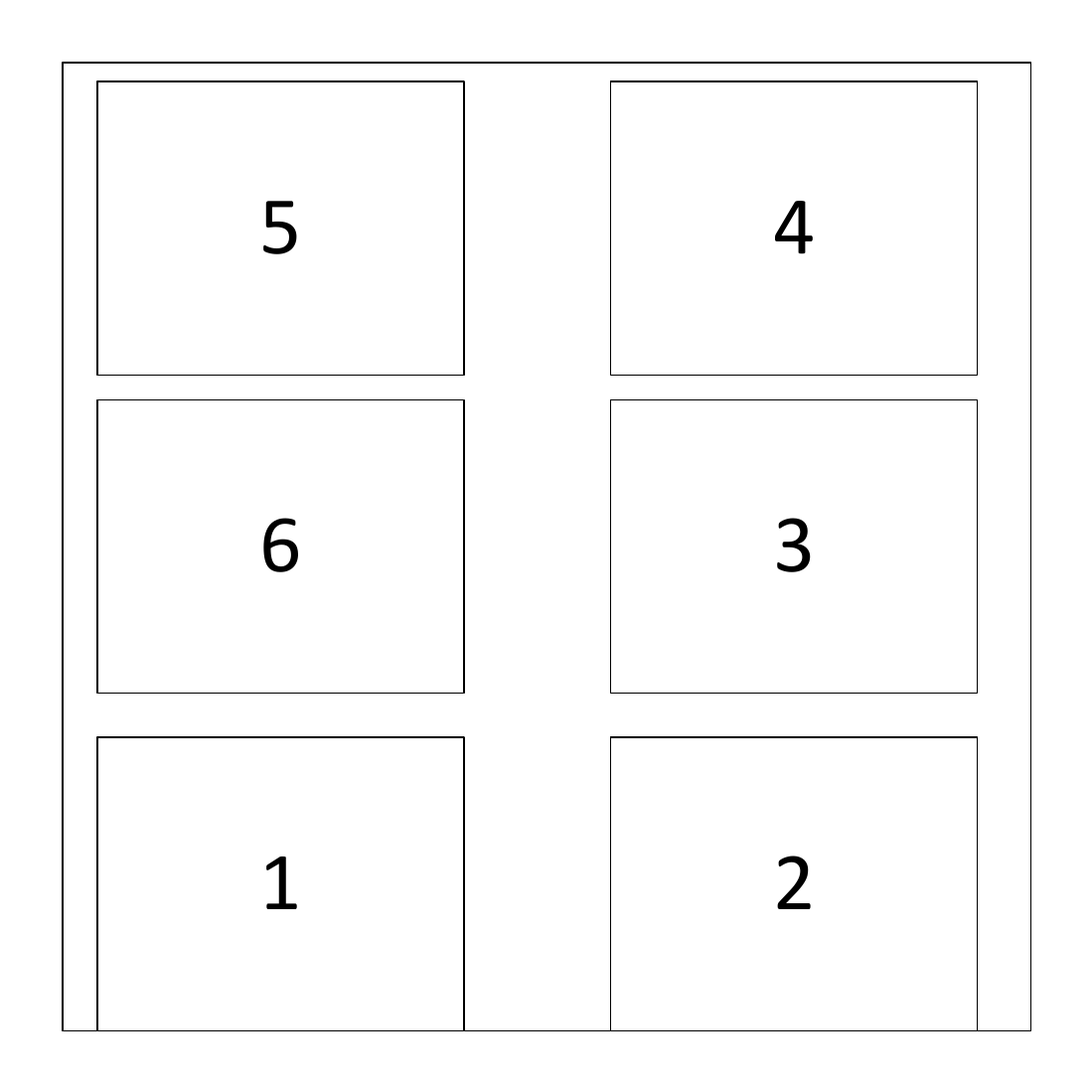}
		\caption{Positions of the boxes in sizes M within a tier of a shelf.}	
		\label{fig:packsets_m}
	\end{subfigure}
\caption{Positions of the boxes in size S (left) and M (right) within a tier of a shelf.}
\end{figure}

\section{Omitted Proofs} \label{app:proofs}
In this section we prove that the models in \Secref{model} always have feasible solutions.
\begin{proof}[Proof of \Propref{existance-of-non-split}]
\newcommand{\ValidOrderSubset}{{\mathcal O'}}
By the assumption about \textit{Maximal order size} in Section \ref{subsec:assumption}, there exists an order $o'$ such that $o'$ has no more items than capacity $C_{s'}$ at some station $s'\in \mathcal{S}$. Let us select this order and the corresponding pods for our solution.

 Formally this means:
Set $x_{ps} := 1$ for all $p \in \mathcal{P}_s$ as required by  \eqref{eq:oap-10}.
	Set $y_{o'} := 1$ and $y_{o} := 0$ otherwise. Set $y_{o's'}:=1$ and set $y_{os}:=0$ otherwise.
	Then \eqref{eq:oap-4} holds.
	Set $y_{io's'}:=1$ for all $i \in \mathcal{I}_{o'}$ and $y_{ios}:=0$ otherwise.
	Then  \eqref{eq:oap-3} holds. By the choice of $o'$ and $s'$ constraint \eqref{eq:oap-5} holds too.
	Finally, we set $x_{ps'}:=1$ for all $p \in \SetOfPodsBySku{i}$ with $i\in \mathcal{I}_{o'}$, then
	\eqref{eq:oap-9} is fulfilled. Thus we have constructed a feasible solution.
\end{proof}
\begin{proof}[Proof of \Propref{existance-of-station-split}]  \label{proof:proof_split_stations}
We show that every solution of the integrated model also solves the 
	split-among-stations model.
	Let $(x'_{ps})_{p \in \mathcal{P}, s \in \mathcal{S}}$, $(y'_{o})_{{o \in \mathcal O}}$, $(y'_{os})_{o \in \mathcal{O}, s \in \mathcal{S}}$,  and $(y'_{ios})_{o \in \mathcal{O}, i_o \in \mathcal{I}_o,  s \in \mathcal{S}}$ be a solution of the integrated model.
	From \eqref{eq:oap-3} follows \eqref{eq:oap-23}.
	Set   $e'_o:=0$ for all  $o \in \mathcal{I}_o$, then from \eqref{eq:oap-4} follow \eqref{eq:oap-24}, \eqref{eq:oap-2} and \eqref{eq:oap-215}.
	Substitute \eqref{eq:oap-3} into \eqref{eq:oap-4}, then \eqref{eq:oap-26} follows.
	If we sum \eqref{eq:oap-4} on both sides of the equation over $i\in \mathcal{I}_o$ we get $\sum_{i_o \in \mathcal{I}_o}y'_{ios} = \sum_{i_o \in \mathcal{I}_o} y'_{os} \geq y'_{os}$ and therefore $\eqref{eq:oap-27}$ holds.

From the first part of this proof and \Propref{existance-of-non-split}, it follows that there is a feasible solution for the split-among-stations model.

Finally, we have the same objective function \eqref{eq:oap-1}. Because the split-among-stations problem is an optimization problem, its optimal solution is either\\ $(x'_{ps})_{p \in \mathcal{P}, s \in \mathcal{S}}$, $(y'_{o})_{{o \in \mathcal O}}$, $(y'_{os})_{o \in \mathcal{O}, s \in \mathcal{S}}$,  $(y'_{ios})_{o \in \mathcal{O}, i_o \in \mathcal{I}_o,  s \in \mathcal{S}}$ or better.
\end{proof}
\begin{proof}[Proof of \Propref{existance-of-station-time-split}] \label{proof:proof_split_time}
The proof is analogous to the proof of \Propref{existance-of-station-split}.
	Let $(x'_{ps})_{p \in \mathcal{P}, s \in \mathcal{S}}$, $(y'_{o})_{{o \in \mathcal O}}$, $(y'_{os})_{o \in \mathcal{O}, s \in \mathcal{S}}$,  and $(y'_{ios})_{o \in \mathcal{O}, i_o \in \mathcal{I}_o,  s \in \mathcal{S}}$, $(e'_o)_{o \in \mathcal{O}}$, be a solution of the split-among-stations model.
	Set all $y'^b_{i_o}:=0$, then \eqref{eq:oap-36} holds.
	Furthermore, we have the same objective function
	\eqref{eq:oap-21}. With the same argumentation as in the proof of \Propref{existance-of-station-split}, we conclude all the statements of \Propref{existance-of-station-time-split}.
\end{proof}
\section{Additional parameters in the simulation} \label{app:restpara}
\begin{table}[H]
	\centering
	\begin{tabular}{ l l }
		\hline
		Parameter & Value\\
		\hline
		Robot acceleration/deceleration &  $1 \frac{m}{s^2}$\\
		Robot maximum velocity & $1.5 \frac{m}{s}$\\
		Time needed for a full turn of a robot & $2.5s$\\
		Time needed for lifting and storing a pod & $2.2s$\\
		Time needed for picking a unit & $7s$\\
		Time needed for handling a unit at picking station & $13s$\\
		\hline
	\end{tabular}
	\captionof{table}{Parameters of robot movement and time for picking units.}
	\label{tab:Otherparams}
\end{table}

\end{document}